\theoremstyle{plain}
\newtheorem{theorem}{Theorem}[section]
\newtheorem{lemma}[theorem]{Lemma}
\newtheorem{corollary}[theorem]{Corollary}
\theoremstyle{definition}
\newtheorem{definition}[theorem]{Definition}
\newtheorem{assumption}[theorem]{Assumption}
\theoremstyle{remark}
\newtheorem{remark}[theorem]{Remark}
\newcommand{\T}{T}
\renewcommand{\d}{D}
\def\eqref#1{equation~(\ref{#1})}
\def\1{\bf{1}}
\def\fA{{\mathcal{A}}}
\def\fD{{\mathcal{D}}}
\def\fF{{\mathcal{F}}}
\def\fP{{\mathcal{P}}}
\def\fS{{\mathcal{S}}}
\def\fX{{\mathcal{X}}}
\def\fY{{\mathcal{Y}}}
\icmltitlerunning{Task Generalization With AutoRegressive Compositional Structure}
\begin{document}

\twocolumn[
\icmltitle{ Task Generalization With AutoRegressive Compositional Structure: \\Can Learning From $D$ Tasks Generalize to $D^{T}$ Tasks?
}







\icmlsetsymbol{equal}{*}

\begin{icmlauthorlist}
\icmlauthor{Amirhesam Abedsoltan}{equal,cse}
\icmlauthor{Huaqing Zhang}{equal,tsinghua}
\icmlauthor{Kaiyue Wen}{stanford}
\icmlauthor{Hongzhou Lin}{amazon}
\icmlauthor{Jingzhao Zhang}{tsinghua}
\icmlauthor{Mikhail Belkin}{cse,hdsi}
\end{icmlauthorlist}

\icmlaffiliation{cse}{Department of Computer Science and Engineering, UC San Diego.}
\icmlaffiliation{hdsi}{Halicioglu Data Science Institute, UC San Diego}
\icmlaffiliation{tsinghua}{Institute for Interdisciplinary Information Sciences, Tsinghua University.}
\icmlaffiliation{stanford}{Stanford University.}
\icmlaffiliation{amazon}{Amazon. This work is independent of and outside of the work at Amazon.}

\icmlcorrespondingauthor{Amirhesam Abedsoltan}{aabedsoltan@ucsd.edu}

\icmlkeywords{Machine Learning, ICML}

\vskip 0.3in
]



\printAffiliationsAndNotice{\icmlEqualContribution} 

\begin{abstract}

Large language models (LLMs) exhibit remarkable task generalization, solving tasks they were never explicitly trained on with only a few demonstrations. This raises a fundamental question: When can learning from a small set of tasks  generalize to a large task family? In this paper, we investigate task generalization through the lens of autoregressive compositional structure, where each task is a composition of $T$ operations, and each operation is among a finite family of $\d$ subtasks. This yields a total class of size~\( \d^\T \). We first show that generalization to all \( \d^\T \) tasks is theoretically achievable by training on only \( \tilde{O}(\d) \) tasks. Empirically, we demonstrate that Transformers achieve such exponential task generalization on sparse parity functions via In-context Learning (ICL) and chain-of-thought (CoT) reasoning. We further show generalization in arithmetic and translation, beyond parity functions.

\end{abstract}
\section{Introduction}

Large language models (LLMs) demonstrate a remarkable ability to solve tasks they were never explicitly trained on. Unlike classical supervised learning, which typically assumes that the test data distribution follows the training data distribution, LLMs can generalize to new task distributions with just a few demonstrations—a phenomenon known as in-context learning (ICL) \citep{brown2020language, wei2022emergent, garg2022can}. Recent studies suggest that trained Transformers implement algorithmic learners capable of solving various statistical tasks—such as linear regression—at inference time in context \citep{li2023transformers, bai2023transformers}. Despite their success in tasks such as learning conjunctions or linear regression, Transformers relying solely on ICL struggle with more complex problems, particularly those requiring hierarchical reasoning.

\begin{figure}[t!]
    \centering
    \includegraphics[width=0.45\textwidth]{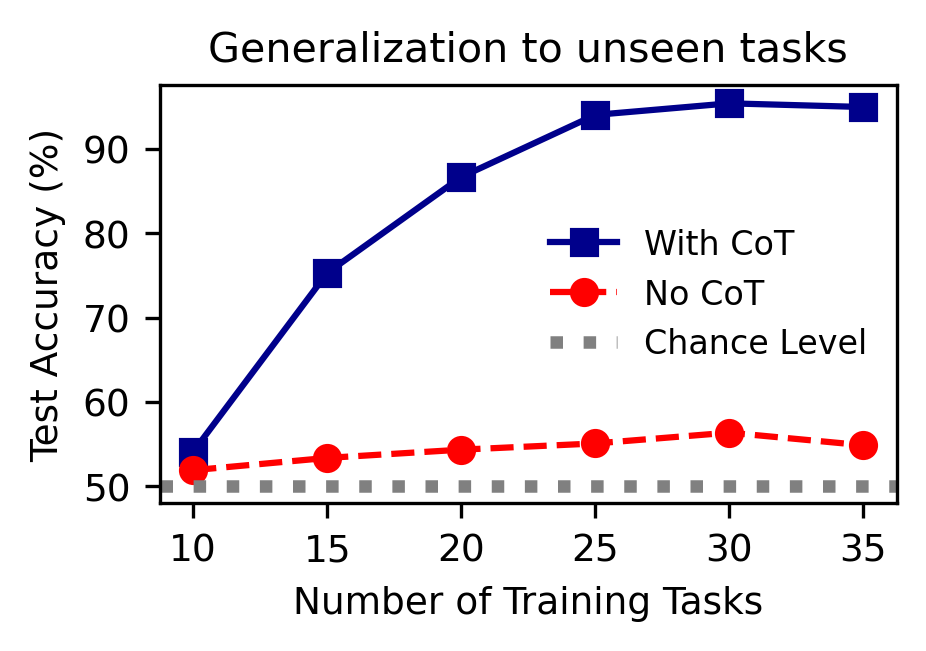}
    \vspace{-5mm}
    \caption{We train a Transformer to learn parity functions through In-Context Learning (ICL): given a demonstration sequence \((\bm x_1, f(\bm x_1)), \dots, (\bm x_n, f(\bm x_n))\), infer the target \( f(\bm x_{\mathrm{query}}) \) from a new input $\bm x_{\mathrm{query}}$. Each function \( f \) defines a distinct learning task. In this prototype experiment, tasks are sampled from the parity function family \(Parity (10,2)\) with secret length $k=2$ and bit length $d=10$, totaling 45 tasks. To evaluate task generalization, we withhold a subset of tasks and train only on different subset of the remaining ones. Consistent with prior work \cite{bhattamishra2024understanding}, we observe that standard ICL fails to generalize across tasks. In contrast, incorporating Chain-of-Thought (CoT) reasoning significantly improves performance on unseen tasks.}
    \label{fig:in_out_dist_prelim}
\end{figure}   

A notable case where Transformers struggle with ICL is the learning of parity functions, as examined in \cite{bhattamishra2024understanding}. In this setting, a Transformer is provided with a sequence of demonstrations $
(\bm{x}_1, f(\bm{x}_1)), \dots, (\bm{x}_n, f(\bm{x}_n))
$
and is required to predict \( f(\bm{x}_{\mathrm{query}}) \) for a new input \( \bm{x}_{\mathrm{query}} \). Specifically, they focused on parity functions from the class \( \text{Parity}(10,2) \), where each function is defined by a secret key of length \( k=2 \) within a length space of \( d=10 \). Each function \( f \) corresponds to a distinct learning task, resulting in 45 possible tasks. To assess generalization, a subset of tasks was held out during training. Their results demonstrate that Transformers trained via ICL fail to generalize to unseen tasks, even when the new tasks require only a simple XOR operation. These findings, along with other empirical studies \cite{an2023context, xu2024do}, suggest that standard ICL struggles with tasks requiring hierarchical or compositional reasoning.


In contrast, we found that incorporating Chain-of-Thought (CoT) reasoning—introducing intermediate reasoning steps to the model—allows Transformers to easily generalize to unseen tasks, as illustrated in Figure~\ref{fig:in_out_dist_prelim}. Consistent with \cite{bhattamishra2024understanding}, we observe that Transformers without CoT perform only slightly better than chance level, no matter how many training tasks are presented to the model. However, as the number of training tasks increases, Transformers with CoT achieve near-perfect generalization on the held-out set of unseen tasks. We see that the extra information provided by CoT enables the model to exploit the compositional structure of the parity problem.

Motivated by this example, we aim to systematically analyze how models can leverage autoregressive compositional structures to extend their capabilities beyond the training tasks. Conventionally, learning involves approximating a target function \(f^*\) drawn from a function class \(\mathcal{F}\) using examples from a training distribution over the input space \(\mathcal{X}\); generalization is then measured by testing $f^*$ on new  examples. In contrast, our focus is on \textbf{task generalization}, where training is restricted to a subset of functions or ``tasks'' \( \mathcal{F}_{\mathrm{train}} \subset \mathcal{F} \), leaving the remaining functions, unseen during training. Our goal is to investigate whether a model trained on tasks from \( \mathcal{F}_{\mathrm{train}} \) (with inputs from \( \mathcal{X} \)) can generalize to \textit{all tasks}, including \textit{unseen} tasks.

This notion of task generalization goes beyond the standard out-of-distribution (OOD) settings (see, e.g., \cite{zhou2022domain} for review) by shifting the focus from adapting to new input distributions to learning entirely new tasks. Specifically, we ask:

\medskip
\textit{How can we quantify the number of tasks a model must be trained on to generalize to the entire class $\mathcal{F}$?}
\medskip

To analyze task generalization, we consider a finite set of functions \(\mathcal{F}\), where each function maps an input \(\bm x \in \mathcal{X}\) to a tuple of random variables 
$\bm y = (y_1, \dots, y_T)$. We assume each function can be characterized by a parameter tuple $\theta = (\theta_1, \theta_2, \dots, \theta_T)$.
The outputs are generated autoregressively: first, \(y_1\) is produced from \(\bm x\); then \(y_2\) is generated from \(\bm x\) and \(y_1\); and then \(y_3\) is generated from \(\bm x\), \(y_1\) and \(y_2\); and this process continues until \(y_T\) is produced. 
 Specifically, the sequence is generated sequentially as:
\[
y_t \sim P_{\theta_t}(y_t \mid \bm x, \bm y_{<t}), \quad \text{for } t = 1, \dots, T,
\]
where \(\bm y_{<t} = (y_1, \dots, y_{t-1})\) denotes the previously generated outputs, and $P_{\theta_t}$ is some conditional probability distribution that is parametrized by $\theta_{t}$ and is conditioned on $\bm y_{<t}$ and $\bm x$.This structure can also be interpreted as a sequence of compositions,

\vspace{-8mm}
\begin{align*}
    \bm x &\xrightarrow{P_{\theta_1}} y_1 \\
    \bm x, y_1 &\xrightarrow{P_{\theta_2}} y_2 \\
    &\dots \\
    \bm x, y_1, \dots, y_{T-1} &\xrightarrow{P_{\theta_{T-1}}} y_T\;.
\end{align*}

We will call this function class \textit{AutoRegressive Compositional structure}. 
Assuming that the cardinality of the set of possible values for each parameter $\theta_t$ is finite and is equal to \( \d \), we will use the notation $\mathcal{F} =ARC(T, \d)$. The cardinality of this class is $\d^T$.

For the sparse parity problem with \(k\) secret keys in this framework, the output sequence has length \(T = k\). Given an input \(\bm x \in \mathcal{X} = \{0,1\}^n\), let the secret keys correspond to indices \(i_1, i_2, \dots, i_k\) (in a predetermined order). The output sequence \(\bm y = (y_1, y_2, \dots, y_k)\) is defined as follows, 
$$y_1 = x_{i_1}, \; y_2 = x_{i_1} \oplus x_{i_2}, \; \dots, \; y_k = x_{i_1} \oplus x_{i_2} \oplus \dots \oplus x_{i_k}.$$

That is, each \(y_t\) recovers the XOR of the first \(t\) secret coordinates. In this example, the output distribution at each step is deterministic, assigning probability 1 to the correct XOR value and 0 to all other values.

\noindent We can now address the following fundamental question: 

\textit{How many tasks in \(\mathcal{F}_{\mathrm{train}}\) must a model be trained on to generalize to all tasks in \(\mathcal{F}\), including those it has not seen? In particular, can a model trained on \( \tilde{O}(\d) \) tasks generalize across the entire set of \( \d^T \) tasks?} 

\noindent Our main contributions are:
\vspace{-3mm}
\begin{itemize}
    \item We define AutoRegressive Compositional structure and introduce a framework to quantitatively analyze task generalization when the function class follows an AutoRegressive Compositional structure. (Sections \ref{sec: ARC} and \ref{sec: task generalization})
    \item We establish that under this structure, task generalization to all \( \d^T \) tasks is theoretically achievable by training on  \( \tilde{O}(\d) \) tasks up to logarithmic terms (\cref{sec: Exp Task Generalization}).
    \item We demonstrate how the parity problem aligns with our framework and empirically show that Transformers trained on i.i.d. sampled tasks exhibit exponential task generalization via chain-of-thought (CoT) reasoning, consistent with theoretical scaling (\cref{sec: Experiments}).
    \item Finally, we show that the selection of training tasks significantly impacts generalization to unseen tasks. If tasks are chosen adversarially, training on even nearly all $\d^T$ of the tasks with CoT may fail to generalize to the remaining tasks (\cref{sec: Experiment beyond iid sampling}).
\end{itemize}

\section{Related Works}

\subsection{Composition and Generalization}

The role of composition in reasoning for language models has been widely studied.  
\cite{saparov2023testing} explores various out-of-distribution (OOD) generalization formats, including compositional generalization, showing that a neural network’s ability to generalize compositionally is highly dependent on both architecture and task properties. Similar conclusions have been drawn in prior works~\citep{lake2018generalization, keysers2019measuring}. Further, \citep{bhattamishra2024understanding, dziri2023faith, an2023context, xu2024do} examine compositional generalization in in-context learning (ICL) and find that generalization to composing multiple steps is in general hard for LLMs. One notable observation is that LLMs succeed in compositional generalization for clause satisfaction problems but not for parity problems. 

Another line of research investigates composition as a mechanism underlying emergent abilities in language models. \cite{arora2023theory} demonstrates that language modeling can lead to learning tuples of skills, which are small compositions of fundamental capabilities. Building on this idea, \citep{kaur2024instruct, zhao2024can} leverage compositional structures to generate supervised fine-tuning (SFT) data, leading to improved language model performance.  

Beyond sequential composition, other forms of compositionality in neural networks have been explored. \cite{song2024out} investigates layer-wise composition in transformers, while \cite{schug2023discovering} proposes a modular neural architecture for learning hidden compositional representations. Additionally, \cite{wiedemer2024compositional} examines compositional structures in image reconstruction, and \cite{lippl2024does} provides a theoretical analysis of composition in kernel and linear models.  

While prior work has largely focused on qualitative insights into compositional generalization, our work takes a quantitative approach: studying how many training tasks are needed to achieve task generalization over an entire function class.

\subsection{Learning and Testing with Multiple Distributions}

Our work aims to analyze generalization when the training and testing distributions differ. This problem has been studied from various perspectives in the statistical learning community. One approach is to frame it as learning a shared representation across multiple tasks. \cite{ye2021towards} defines variation and informativeness between different environments based on a common representation, while \cite{arjovsky2019invariant} addresses the problem by designing specific training objectives. Earlier studies on linear and kernel models also explore this direction~\citep{du2017hypothesis, lei2021near}.

Another perspective considers the testing environment as a distribution shift, where the model may sample during inference to achieve domain adaptation. \cite{mansour2009domain} analyzes generalization error when a model is trained on distribution \( P \) but tested on a different distribution \( Q \), introducing an error bias dependent on the distance \( d(P,Q) \). To mitigate this bias, \cite{cortes2010learning} proposes reweighting training samples when test samples from \( Q \) are available.  

A related line of research investigates scenarios where both training and test samples are accessible. Notable setups include covariate shift~\citep{kpotufe2021marginal, ma2023optimally} and domain adaptation~\citep{sugiyama2007covariate, ben2014domain}. When direct sampling from the test distribution is not feasible, alternative strategies focus on training robustly against worst-case shifts. This can be achieved through adversarial perturbations or min-max optimization formulations~\citep{madry2017towards, raghunathan2020understanding, duchi2023distributionally}.

In this work we impose an AutoRegressive Compositional(ARC) structure on the function class and propose a new framework to study task generalization. This compositional structure decomposes the function class into atomic subtasks, enabling a modular approach to learning. By leveraging this structure, we establish a quantitative understanding of how many training tasks are required for generalization. Our results provide a theoretical foundation for structured learning and demonstrate how models can efficiently generalize beyond the training distribution.

\newcommand{\train}{\mathrm{train}}
\newcommand{\demo}{\mathrm{infer}}
\newcommand{\parity}{\mathrm{parity}}
\newcommand{\tv}{{\mathrm{TV}}}
\newcommand{\iid}{\overset{\text{i.i.d.}}{\sim} }
\newcommand{\xor}{\mathrm{xor}}
\section{Theoretical Framework for Task Generalization}\label{sec:theory}

In this section, we present a theoretical framework to study task generalization with autoregressive compositional structure. When the compositional structure holds, we show that there exists a learning algorithm that is only trained on $\tilde O(\d)$ different tasks, but can generalize to exponentially many unseen tasks.

\subsection{Preliminaries and Notations}\label{sec: theory formulation}

For a positive integer $n$, denote $[n]=\{1,2,\cdots,n\}$. For a finte set $\fS$, we denote by $\Delta(\fS)$ the probability simplex over with support $\fS$. Given \( t \) sets \(\mathcal{S}_1, \mathcal{S}_2, \dots, \mathcal{S}_t\), their \textbf{Cartesian product} is defined as
\vspace{-3mm}
\[
\bigtimes_{i=1}^t \mathcal{S}_i \coloneqq \left\{ (s_1, s_2, \dots, s_t) \mid s_i \in \mathcal{S}_i \ \text{for all} \ i \in [t]\right\}.
\]
We further denote $\mathcal{S}^{t} \coloneqq \bigtimes_{i=1}^t \mathcal{S}$. For two probability distributions \( P \) and \( Q \) over a discrete space \(\mathcal{S}\), the \textbf{total variation distance} is defined as
\[
\tv(P, Q) \coloneqq \frac{1}{2} \sum_{s \in \mathcal{S}} \left| P(s) - Q(s) \right|.
\]

We let the bold letter $ \bm{y}$ denote a sequence, and the subscripted $\bm{y}^j$ denote the $j_{th}$ sequence / example. Within each sequence $\bm{y}^j = (y_1, ..., y_T)$, the regular letter $y_t$ denote the $t_{th}$ token in the sequence.

\subsection{AutoRegressive Compositional Structure }\label{sec: ARC}

In the following definition, we formally introduce the \textbf{AutoRegressive Compositional (ARC)} task class, which models structured sequence generation through a composition of conditional distributions.

\begin{definition}\label{def:ar_composition}
\textit{(AutoRegressive Compositional task class).} Let $\mathcal{X}$ and $\mathcal{Y}$ denote the finite input and output spaces, respectively. The AutoRegressive Compositional (ARC) task class consists of sequential generation processes:
\[
\mathcal{F} \coloneqq \left\{ f_\theta = (P_{\theta_1}, \dots, P_{\theta_T}) \mid P_{\theta_t} \in \mathcal{P}_{\Theta_t} \text{ for all } t \in [T] \right\},
\]
where each task \( f_\theta \in \mathcal{F} \) for any input $\bm{x} \in \mathcal{X}$ generates an output sequence \( \bm{y} = (y_1, \dots, y_T)\in \mathcal{Y} \) through an autoregressive sampling process:
\[
y_t \sim P_{\theta_t}(\cdot \mid \bm{x}, \bm{y}_{<t}), \quad \text{for all } t \in [T].
\]
At each step \( t \), the conditional probability distribution \( P_{\theta_t} \) is drawn from a subtask family \( \mathcal{P}_{\Theta_t} \), parametrized by \( \theta_t \):
\[
\mathcal{P}_{\Theta_t} \coloneqq \left\{ P_{\theta_t}(\cdot \mid \bm{x}, \bm{y}_{<t}) : \mathcal{X} \times \mathcal{Y}^{t-1} \to \Delta(\mathcal{Y}) \mid \theta_t \in \Theta_t \right\}.
\]
Here, \( \Theta_t \) represents the parameter space at step \( t \), and the overall task parameter space is \( \Theta \coloneqq \bigtimes_{t=1}^T \Theta_t \). Assuming each step has a finite number of possible subtasks, i.e., \( |\Theta_t| = d \) for all \( t \in [T] \), the AutoRegressive Compositional task class \( ARC(d, T) \) consists of \( |\mathcal{F}| = |\Theta| = d^T \) tasks.
\end{definition}

Given any input $\bm{x} \in \mathcal{X}$ and a sequence $\bm{y} \in \mathcal{Y}^T$, the joint distribution for a task $f_\theta=(P_{\theta_1},\cdots,P_{\theta_T}) \in \mathcal{F}$ is:
\begin{align}\label{eq:joint dist}
P_\theta(\bm{x}, \bm{y}) = P(\bm{x}) \prod_{s=1}^T P_{\theta_s}(y_s \mid \bm{x}, \bm y_{<s}), 
\end{align}
and for partial sequences up to any $t \in [T]$:
\begin{equation}
P_{\theta_{1:t}}(\bm{x}, \bm y_{1:t}) = P_x(\bm{x}) \prod_{s=1}^t P_{\theta_s}(y_s \mid \bm{x}, \bm y_{<s}).
\end{equation}

At a high level, an AutoRegressive Compositional task class \( ARC(\d, T) \) is characterized by two key properties:
\vspace{-2mm}
\begin{itemize}[leftmargin=0.4 cm]
    \item \textbf{Modularity.} The generation process is decomposed into \( T \) sequential steps, each governed by an independent conditional distribution \( P_{\theta_t} \in \mathcal{P}_{\Theta_t} \). This modular structure allows tasks to be constructed by combining different components at each step.
    
    \item \textbf{Exponential Growth.} The task class size grows exponentially in $T$ as $|\mathcal{F}| = \d^T$, despite each step having only \(\d\) choices. This reflects the combinatorial nature of task construction, where  variations at each step lead to an exponentially large set of possible tasks. 
\end{itemize}

\subsection{Task Generalization}\label{sec: task generalization}

Under the autoregressive task learning setup, there are two levels of generalization:
\vspace{-2mm}
\begin{enumerate}
    \item Generalizing to unseen inputs within a task.
    \item Generalizing to unseen tasks in the class $\mathcal{F}$.
\end{enumerate}
\vspace{-2mm}
We focus on the latter one, referred as \textbf{task generalization}.

\vspace{-2mm}
\paragraph{Training Phase} During training, the model can only access to a small subset of tasks 
$\mathcal{F}_{\mathrm{train}} = \{f_{\theta^1}, \dots, f_{\theta^{n_\theta}}\} \subseteq \mathcal{F}$ with $n_\theta = |\mathcal{F}_{\mathrm{train}}|$. For each task $f_{\theta^i} \in \mathcal{F}_{\mathrm{train}}$, we observe $n_x$ i.i.d. {demonstration samples}:
\[
\mathcal{D}_i = \left\{(\bm x^{i,j}, \bm y^{i,j})\right\}_{j=1}^{n_x} \iid P_{\theta^i}(\bm{x}, \bm{y}),
\]
where $P_{\theta^i}$ is defined as in~\cref{eq:joint dist}. The full training dataset is the union of ${\mathcal{D}_i}$ denoted by $\mathcal{D}_{\mathrm{train}} = \{\mathcal{D}_i\}_{i=1}^{n_\theta}$.

We assume the learner does not know the true subtask conditional distribution families $\{\mathcal{P}_{\Theta_t}\}_{t=1}^T$ a priori. Instead, it accesses to a \textbf{larger hypothesis class}:
\[
\mathcal{P}_{\Xi_t} \coloneqq \left\{ P_{\zeta_t}(\cdot \mid \bm{x}, \bm y_{<t}) \ \big| \ \zeta_t \in \Xi_t \right\} \supseteq \mathcal{P}_{\Theta_t},
\]
where $\Xi_t$ parameterizes the learner’s model class at step $t$. The goal of training is to \textit{identify the true subtask families} $\mathcal{P}_{\Theta_t}$ from $\mathcal{P}_{\Xi_t}$ through $\mathcal{D}_{\mathrm{train}}$.

\vspace{-2mm}
\paragraph{Inference Phase}
At test time, the learner is given $\ell$ inference-time demonstration samples:
\[
\mathcal{D}_{\demo} = \{(\tilde {\bm{x}}^i, \tilde{\bm{y}}^i)\}_{i=1}^\ell \iid P_{\tilde{\theta}}({\bm{x}}, {\bm{y}}),
\]
where $f_{\tilde{\theta}} \in \mathcal{F}$ is an unseen task. The learner must identify the true conditionals $\{P_{\tilde{\theta}_t}\}_{t=1}^T$ from $\mathcal{P}_{\Theta_t}$ for each step $t$. Formally, the learner $\fA$ that is trained on $\fD_\train$ and given $\fD_\demo$ as input, produces an output sequence of conditional distributions:
\[
\mathcal{A}\left(\mathcal{D}_{\demo}; \mathcal{D}_{\mathrm{train}}\right) \in \{(P_{\xi_1},\cdots,P_{\xi_T})~|~P_{\xi_t}\in \fP_{\Xi_t}, t\in [T]\}.
\]

\subsection{Main Result: Exponential Task Generalization}\label{sec: Exp Task Generalization}

We now establish our main theoretical result: with the compositional structure in Definition~\ref{def:ar_composition}, a learner can achieve exponential task generalization with only $\tilde{O}(\d)$ training tasks. This demonstrates how compositional structure fundamentally reduces the sample complexity of task learning from exponential to polynomial in $\d$. Our results hold under the following  mild assumptions:

\begin{assumption}[Compositional Identifiability]\label{assm: compositional structure}
The autoregressive task class $\mathcal{F}$ satisfies:
\begin{enumerate}[leftmargin=0.4 cm]
    \item \textbf{Finite Subtask Families.} For each $t \in [T]$, the hypothesis class $\mathcal{P}_{\Xi_t}$ is finite and the subtask conditional distribution family $\mathcal{P}_{\Theta_t} \subseteq \mathcal{P}_{\Xi_t}$ has size $|\mathcal{P}_{\Theta_t}| = \d$.

    \item \textbf{Task Identifiability.} For any $t \in [T]$,  $\theta_{1:t-1} \in \bigtimes_{s=1}^{t-1} \Theta_s$, and  $\theta_t \in \Theta_t$, $\zeta_t \in \Xi_t $, $P_{\zeta_t}\neq P_{\theta_t}$, the induced distributions stasify:
    \[
    \tv\left(P_{\theta_{1:t-1}, \theta_t}, P_{\theta_{1:t-1}, \zeta_t}\right) > 0.
    \vspace{-2mm}
    \]
    Furthermore, for any $t \in [T]$,  $\theta_{1:t-1} \in \bigtimes_{s=1}^{t-1} \Theta_s$, and $\theta_t \neq \theta_t' \in \Theta_t$, the induced distributions satisfy:
    \[
    \tv\left(P_{\theta_{1:t-1},\theta_t}, P_{\theta_{1:t-1},\theta_t'}\right) \geq c > 0.
        \vspace{-2mm}
    \]

\end{enumerate}
\end{assumption}

Under these conditions, we establish our main theorem:

\begin{theorem}[\textbf{Exponential Task Generalization}]\label{thm:exponential task generalization}
Let $\mathcal{F}$ be an AutoRegressive Compositional(ARC) task class satisfying Assumption~\ref{assm: compositional structure}. Then there exists a learner $\mathcal{A}$ with the following property: if during training, one samples $n_{\theta} \ge \d \ln\bigl(100\,\d\,T\bigr)$ tasks uniformly and independently from $\mathcal{F}$, each provided with $n_x$ i.i.d.\  demonstration samples as the training dataset, and if at inference one observes $\ell \;\ge\; \frac{2\,\ln\bigl(100\,T\,n_{\theta}\bigr)}{c^2}$
i.i.d.\ demonstration samples from a previously unseen task $P_{\tilde{\theta}}\in\mathcal{F}$, then
\[
\lim_{n_x \to \infty} 
\Pr\Bigl[
  \mathcal{A}\bigl(\mathcal{D}_{\demo};\,\mathcal{D}_{\mathrm{train}}\bigr)
  \;\neq\;
  \bigl(P_{\tilde{\theta}_1}, \dots, P_{\tilde{\theta}_T}\bigr)
\Bigr]
\;\le\; 0.02,
\]
where $\fD_\train$ and $\fD_\demo$ denote the training dataset and inference-time demonstration samples respectively, and the probability is taken over the random selection of training tasks 
$\mathcal{F}_{\mathrm{train}} \subseteq \mathcal{F}$, 
the training data $\mathcal{D}_{\mathrm{train}}$, 
and the inference-time demonstration samples $\mathcal{D}_{\demo}$. 
\end{theorem}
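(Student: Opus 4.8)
The plan is to build the learner $\mathcal{A}$ in two stages—a training stage that recovers each true subtask family $\mathcal{P}_{\Theta_t}$ from the larger hypothesis class $\mathcal{P}_{\Xi_t}$, and an inference stage that, given $\mathcal{P}_{\Theta_t}$, identifies the correct component $P_{\tilde\theta_t}$ of the unseen task—and then control the failure probability of each stage by a union bound so the total is below $0.02$.

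First I would analyze the training stage in the idealized $n_x\to\infty$ limit. For a fixed training task $f_{\theta^i}$, as $n_x\to\infty$ the empirical joint distribution of $(\bm x,\bm y_{1:t})$ converges (a.s.) to the true $P_{\theta^i_{1:t}}$, so a learner that estimates these marginals can, in the limit, read off the true conditional $P_{\theta^i_t}(\cdot\mid\bm x,\bm y_{<t})$ exactly for every $t$ and every $(\bm x,\bm y_{<t})$ that has positive probability under $P_{\theta^i_{1:t-1}}$. The key combinatorial point is coverage: by the first part of the identifiability assumption, once the learner has seen, for each step $t$, training tasks realizing every one of the $\d$ possible values of $\theta_t$ (in conjunction with \emph{some} reachable prefix), it can use the strict-positivity of TV to distinguish the true $P_{\theta_t}$ from every wrong $P_{\zeta_t}\in\mathcal P_{\Xi_t}$, hence reconstruct all of $\mathcal P_{\Theta_t}$. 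So the training stage succeeds provided the random sample of $n_\theta$ tasks is ``$t$-covering'' for every $t$: for each $t\in[T]$ and each of the $\d$ values in $\Theta_t$, at least one sampled task uses that value at coordinate $t$. Since tasks are sampled uniformly from $\mathcal F=\Theta_1\times\cdots\times\Theta_T$, each coordinate is uniform on $[\d]$ independently, so this is exactly a coupon-collector / union-bound computation: the probability that value $a$ is missed at coordinate $t$ is $(1-1/\d)^{n_\theta}\le e^{-n_\theta/\d}$, and union-bounding over the $\d T$ pairs $(t,a)$ gives failure probability $\le \d T\,e^{-n_\theta/\d}$, which is $\le 1/100$ once $n_\theta\ge \d\ln(100\,\d\,T)$. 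This matches the stated threshold.

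Next I would handle the inference stage, conditioned on the training stage having succeeded (so $\mathcal A$ knows every $\mathcal P_{\Theta_t}$ exactly). Here $n_x$ is not sent to infinity—only $\ell$ demonstration samples from the unseen task $P_{\tilde\theta}$ are available—so I would have $\mathcal A$ proceed greedily in $t$: having committed to $(\tilde\theta_1,\dots,\tilde\theta_{t-1})$, restrict the $\ell$ demonstration sequences to those whose prefix $\bm y_{<t}$ is consistent, form the empirical conditional distribution at step $t$, and pick the $\theta_t\in\Theta_t$ whose induced $P_{\theta_{1:t-1},\theta_t}$ is closest in TV to the empirical distribution. The second part of the identifiability assumption gives a margin: any two distinct candidates at step $t$ are $\ge c$ apart in TV, so a standard concentration argument (Hoeffding/DKW-type bound on the empirical TV distance, or on the empirical frequencies of the finitely many outcomes) shows that with $\ell$ samples the wrong candidate is chosen with probability at most $\exp(-\ell c^2/2)$ at that step. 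Union-bounding over the $T$ steps gives inference failure $\le T\exp(-\ell c^2/2)$, which is $\le 1/100$ once $\ell\ge \frac{2\ln(100\,T\,n_\theta)}{c^2}$ (the extra $n_\theta$ inside the log is slack, consistent with the theorem's bound). A final union bound—training failure $\le 0.01$, inference failure $\le 0.01$—yields total failure $\le 0.02$.

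The main obstacle I anticipate is not either concentration estimate individually but the \emph{interface between them}: making rigorous that a single learner, ignorant of $\{\mathcal P_{\Theta_t}\}$, can in the $n_x\to\infty$ limit certify which finitely many conditionals belong to $\mathcal P_{\Theta_t}$ using only finitely many observed prefixes. One must argue that every prefix needed at inference time is reachable (positive-probability) under some training task, so that the limiting empirical conditionals pin down $P_{\theta_t}$ on exactly the set of arguments later queried; and one must phrase the greedy inference rule so that it only ever evaluates $P_{\theta_{1:t-1},\theta_t}$—a quantity the learner has already identified—rather than some conditional on an unreached prefix. Getting the quantifiers right here (what is known after training, on which inputs, and how that feeds the greedy selection) is the delicate part; the probabilistic bounds themselves are routine once the algorithm is pinned down.
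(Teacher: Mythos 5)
Your proposal mirrors the paper's own proof closely: a coupon-collector union bound showing $n_\theta\ge\d\ln(100\,\d\,T)$ i.i.d.\ tasks give per-component coverage with probability $\ge 0.99$, an $n_x\to\infty$ argument to recover each $\fP_{\Theta_t}$ during training (the paper phrases this via support-restricted MLE plus Pinsker's inequality), a sequential per-step TV-margin discrimination at inference, and a final sum of the two $0.01$ failure terms to reach $0.02$. The one spot your sketch is looser than the paper is the inference-time concentration: picking the candidate closest in empirical TV does not directly yield a support-size-free rate $e^{-c^2\ell/2}$ (the sup defining TV ranges over exponentially many events), so the paper instead runs a Scheff\'e-type pairwise test on the single fixed event $\{P>Q\}$ for each pair of candidates (Lemma~\ref{lem:dist_discrim}), whose statistic is a mean of $\pm1$ random variables and hence cleanly Hoeffding-bounded, and then union-bounds over the $n_\theta$ retained estimates at each step---which is exactly why $n_\theta$ appears inside the $\ell$-threshold, the slack you correctly flagged.
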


In other words, Theorem~\ref{thm:exponential task generalization} shows that the learner can generalize from only $\tilde{O}(\d)$ tasks  to an exponentially large number of unseen tasks, on the order of $\d^T$. The learning algorithm $\fA$ operates in two stage. In the training stage, it applies a maximum-likelihood estimation (MLE) procedure to $\fD_i$ in order to identify the subtasks of the $i$-th training task $(P_{\theta^i_1},\cdots,P_{\theta^i_T})$. In the inference stage, it then uses a total-variation-based distribution discrimination test (\cref{lem:dist_discrim}) on inference-time demonstration samples $\fD_\demo$ to recover $f_{\tilde \theta} = (P_{\tilde \theta_1},\cdots,P_{\tilde \theta_T})$. The proof is deferred to \cref{appendix:maintheorem}.

\begin{remark}
If we additionally assume that every subtask distribution is separated from any incorrect hypothesis by a fixed total-variation margin, i.e.\ for all 
$t \in [T]$, 
$\theta_{1:t-1} \in \bigtimes_{s=1}^{t-1}\Theta_s$, and 
$\theta_t \in \Theta_t$, $\zeta_t \in \Xi_t$ with $P_{\zeta_t}\neq P_{\theta_t}$,
\[
\tv\!\bigl(
  P_{\theta_{1:t-1}, \,\theta_t},\, 
  P_{\theta_{1:t-1}, \,\zeta_t}
\bigr) 
~\ge~ 
r ~>~ 0,
\]
then one can replace the MLE procedure used in the training stage with the same distribution-discrimination approach from the inference stage (\cref{lem:dist_discrim}). Under this condition, we can derive a \textit{non-asymptotic} bound on the \(n_x\) needed per task for accurate identification. See \cref{appendix:nonasymp-result} for details.
\end{remark}

\subsection{Example: Sparse Parity Problem}\label{sec: example: parity}\label{subsec:parity_exmaple}

To illustrate the role of the AutoRegressive Compositional structure, we use sparse parity problem as an example.
\paragraph{Sparse Parity Problem.} Given \( d \) binary variables \( \bm x = (b_1, b_2, \dots, b_d)  \), a sparse parity function selects \( k \) secret indices \( S = \{ i_1, i_2, \dots, i_k\} \) and outputs \( 1 \) if the sum of the corresponding variables is odd, and \( 0 \) otherwise:
\[
\parity_S(b_1, b_2, \dots, b_d) = b_{i_1} \oplus b_{i_2} \oplus \dots \oplus b_{i_k},
\]
where \( \oplus \) denotes the XOR (exclusive OR) operation. We define \( Parity(d, k) \) as the set of all parity functions with \( d \) variables and \( k \) secret indices, yielding a total of  
\[
|Parity(d,k)| = \binom{d}{k} = O(d^k).
\vspace{-3mm}
\]

\paragraph{Representation Matters.}  

Without Chain-of-Thought (CoT), the sparse parity problem \( Parity(d, k) \) is of $ARC(\binom{d}{k}, 1)$, but with CoT, it becomes an autoregressive compositional structure of $ARC(d,k)$. 

\vspace{-3mm}
\begin{itemize}
    \item No CoT $\rightarrow$ $ARC\left(\binom{d}{k}, 1\right)$.
    \item With CoT $\rightarrow$ $ARC(d, k)$.
\end{itemize}
\vspace{-3mm}

\begin{figure}[t] 
    \centering
    \includegraphics[width=0.45\textwidth]{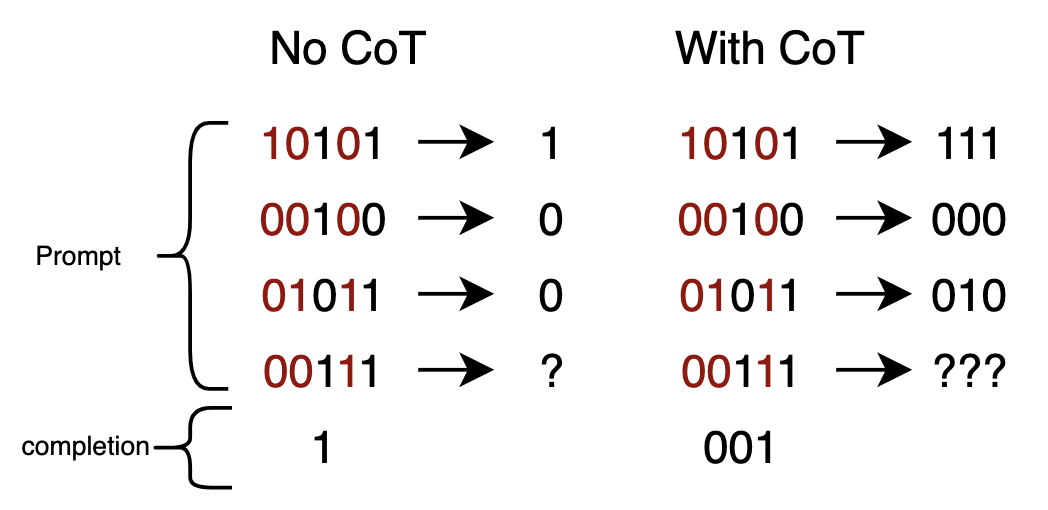}
    \label{fig:cot}
\end{figure}

Indeed, without CoT, the model maps input \( \bm x = (b_1, \dots, b_d) \) directly to output \( y = b_{i_1} \oplus b_{i_2} \oplus \dots \oplus b_{i_k} \) in a single step, hence we have length $T =1$ and the breadth $\d = |Parity(d,k)| = O(d^k)$. Such exponential dependency on the breadth suggests that one would need to train on $O(d^k)$ tasks, explaining what we are observing unsuccessful task generalization in the introduction figure.

In contrast, with CoT~\citep{abbe2024far, wen2024sparse}, the parity computation is decomposed into small steps:  
\[
\bm y = (b_{i_1}, b_{i_1} \oplus b_{i_2}, \dots, b_{i_1} \oplus \cdots \oplus b_{i_k}),
\]
enabling a structured representation that reduces the breadth. More precisely, at step $t$, the class of subtask is exactly defined by the XOR operation with previous token and one secret index : 
\[
P_{(t,i_t)}(y_t|\bm x,\bm y_{<t}) =  \mathbf 1 [y_t=  y_{t-1} \oplus b_{i_t}].
\]

where $i_t$ is the $t$-th secret index, by default lying  $\in [1,d]$. Therefore, the breadth $\d$ at each step is exactly $d$.

This said, the CoT effectively reduces the breadth from $O(d^k)$ to $d$. According to Theorem~\ref{thm:exponential task generalization},

\begin{corollary}\label{corollary:parity}
For the sparse parity problem described above, we can show that the parameter $c$ in Assumption \ref{assm: compositional structure} is $\frac 12$, thus when $n_\theta\geq d\ln (100kd), \ell \geq 8\ln (100kn_\theta)$ , it holds that
\begin{align*}
\lim_{n_x \to \infty} \Pr\big[ \mathcal{A}\left(\fD_\demo;\fD_\train \right) \neq (P_{(1,i_1)},\cdots,P_{(k,i_k)}) \big]& \leq 0.02.
\end{align*}
\end{corollary}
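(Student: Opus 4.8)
The plan is to verify that the sparse parity problem, represented with Chain-of-Thought, is an instance of $ARC(d,k)$ satisfying Assumption~\ref{assm: compositional structure}, identify the constant $c$, and then invoke Theorem~\ref{thm:exponential task generalization} directly. First I would fix the correspondence: at step $t$, the subtask family is $\mathcal{P}_{\Theta_t} = \{P_{(t,i)}(y_t \mid \bm x, \bm y_{<t}) = \mathbf{1}[y_t = y_{t-1}\oplus b_i] : i \in [d]\}$ (with the convention $y_0 = 0$ at $t=1$), so that $|\Theta_t| = d$ and $T = k$. This handles the modularity and exponential-growth structure; what remains is to check the two identifiability conditions and pin down $c = \tfrac12$, after which the corollary's probability bound and the sample-size requirements $n_\theta \ge d\ln(100kd)$, $\ell \ge 8\ln(100kn_\theta) = 2\ln(100kn_\theta)/c^2$ follow immediately by substituting $\d \mapsto d$, $T \mapsto k$, $c \mapsto \tfrac12$ into Theorem~\ref{thm:exponential task generalization}.

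The core computational step is the total-variation lower bound. Given a prefix parameter choice $\theta_{1:t-1}$ and two distinct secret indices $i_t \neq i_t'$ at step $t$, I need to show $\tv(P_{\theta_{1:t-1},(t,i_t)}, P_{\theta_{1:t-1},(t,i_t')}) \ge \tfrac12$. The key observation is that the marginal over $(\bm x, \bm y_{1:t-1})$ is identical under both parameters (it depends only on $\theta_{1:t-1}$), so the TV distance equals $\BE_{\bm x, \bm y_{<t}}\bigl[\tv\bigl(P_{(t,i_t)}(\cdot\mid \bm x,\bm y_{<t}),\, P_{(t,i_t')}(\cdot\mid \bm x,\bm y_{<t})\bigr)\bigr]$. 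For a fixed $\bm x$ with $b_{i_t} \neq b_{i_t'}$, the two conditional distributions are point masses on $y_{t-1}\oplus b_{i_t}$ and $y_{t-1}\oplus b_{i_t'}$, which differ, giving conditional TV distance exactly $1$; when $b_{i_t} = b_{i_t'}$ the conditionals coincide and the conditional TV is $0$. Since $\bm x$ is drawn uniformly from $\{0,1\}^d$ (or any distribution with $\Pr[b_{i_t}\neq b_{i_t'}] = \tfrac12$) and $i_t\neq i_t'$, we get $\Pr[b_{i_t}\neq b_{i_t'}] = \tfrac12$, hence the overall TV distance is exactly $\tfrac12$, so $c = \tfrac12$. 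The same argument applied against an arbitrary hypothesis $\zeta_t \in \Xi_t$ with $P_{\zeta_t}\neq P_{\theta_t}$ — here I should say what $\Xi_t$ is, e.g. all conditional distributions of the form $\mathbf{1}[y_t = g(\bm x, \bm y_{<t})]$ for $g$ ranging over all functions, or more conservatively all of $\Delta(\mathcal{Y})^{\mathcal{X}\times\mathcal{Y}^{t-1}}$ — gives strict positivity of the TV distance, which is the weaker first condition; one must just note that if $P_{\zeta_t}$ differs from $P_{(t,i_t)}$ as a function, there is some $(\bm x,\bm y_{<t})$ in the support where they disagree, and that input has positive probability.

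The step I expect to require the most care is the bookkeeping around the input distribution and the edge cases: I must state explicitly that $\bm x \sim \mathrm{Unif}(\{0,1\}^d)$ (or whatever $P_x$ is assumed) so that $\Pr[b_i \neq b_j] = \tfrac12$ for $i\neq j$ holds, and handle $t = 1$ where there is no previous token (set $y_0 := 0$ so $P_{(1,i)}(y_1\mid\bm x) = \mathbf{1}[y_1 = b_i]$, and the same computation goes through since $b_i\neq b_j$ with probability $\tfrac12$). I would also double-check that the "support" of the conditional in the strict-positivity condition is nonempty with positive probability under $P_{\theta_{1:t-1}}$, which it is because the CoT construction is deterministic and every prefix $\bm y_{<t}$ consistent with some $\bm x$ occurs with the probability of that $\bm x$. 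Once these are nailed down, the proof is a one-line appeal to Theorem~\ref{thm:exponential task generalization}; the full details are deferred to the appendix.
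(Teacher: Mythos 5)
Your proposal is correct and follows essentially the same approach as the paper: both reduce the corollary to verifying $c = \tfrac12$ by computing $\tv(P_{\theta_{1:t-1},i_t}, P_{\theta_{1:t-1},i_t'}) = \Pr_{\bm x \sim \mathrm{Unif}(\{0,1\}^d)}[x_{i_t} \neq x_{i_t'}] = \tfrac12$, and then substituting $\d = d$, $T = k$ into Theorem~\ref{thm:exponential task generalization}. The paper computes the TV distance directly over the joint distribution on $(\bm x,\bm y_{1:t})$, whereas you decompose it as an expectation over $(\bm x, \bm y_{<t})$ of the conditional TV distances (which is $1$ when $x_{i_t}\neq x_{i_t'}$ and $0$ otherwise); these are equivalent computations, and your version is arguably slightly cleaner. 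You also explicitly note the $t=1$ convention and the strict-positivity condition against the hypothesis class $\Xi_t$, which the paper's proof leaves implicit.
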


In other words, the family of $Parity(d, k)$ with CoT is learnable with ${O}(d \log (d))$ training tasks.

\section{Experiments: Parity Problem Case Study}\label{sec: Experiments}

As we have shown, there exists a learning algorithm that is only trained on $\tilde O(d)$ different tasks to fully generalize on all the tasks in $Parity(d, k)$. However, from a practical standpoint, it is not clear whether a \textit{Transformer} can actually match this task complexity. In this section, we present empirical evidence demonstrating that a standard Transformer can indeed learn the sparse parity function with CoT using $\tilde{O}(d)$ training tasks. The paper's \href{https://github.com/ahabedsoltan/Task-Generalization-With-AutoRegressive-Compositional-Structure}{GitHub repository} can be found online.

\subsection{Experimental Setup: In-Context Learning }

Our empirical setup is a refinement of the theoretical framework presented in ~\cref{sec: task generalization}, and closely follows that of \citep{garg2022can, bhattamishra2024understanding}. In this setup, a
sequence model $M$ (such as Transformers) is trained using $N$ sequences, each sequence consisting of
$m$ demonstration samples $(\bm{x}_1, \bm{y}_1, \dots, \bm{x}_m, \bm{y}_m)$. The model is trained for the next token-prediction task,
except that we only consider $y_j$ in loss optimization: for each context $(\bm{x}_1, \bm{y}_1, \dots, \bm{x}_{j-1}, \bm{y}_{j-1}, \bm{x}_j)$, the model predicts $\hat{\bm{y}}_j$, and the loss is given by $\frac{1}{m} \sum_{j=1}^m \ell(\hat{\bm{y}}_j, \bm{y}_j)$. In our experiment, we use cross-entropy loss to measure the discrepancy between the predicted output $\hat{\bm{y}}_j$ and the true output $\bm{y}_j$.

When Chain of Thought (CoT) is used, each $y_j$ is itself a sequence of length $k$ representing intermediate reasoning steps. In this case, the loss is the average on all these intermediate steps. 

\vspace{-3mm}
\begin{figure*}[ht!]
    \centering
    \includegraphics[width=0.9\textwidth]{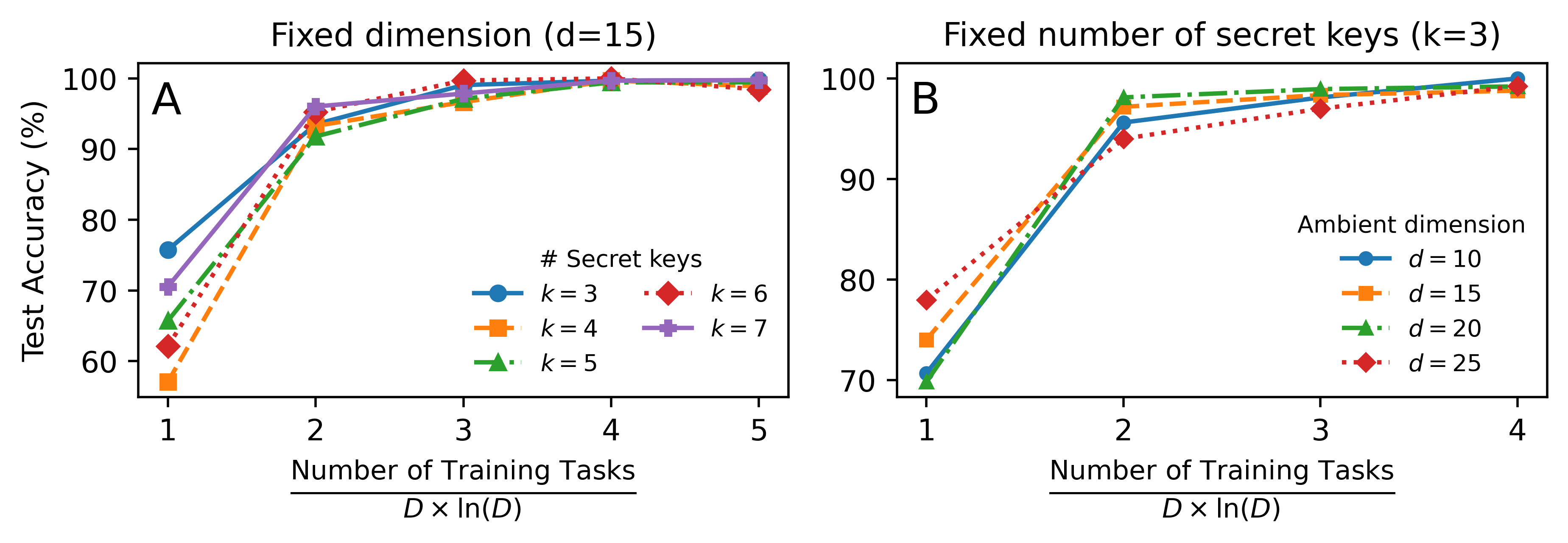}
    \vspace{-4mm}
    \caption{Test accuracy on unseen tasks. For parity task: $\d = d$ as the ambient dimension and $T = k$ as the number of secret indices. We show that the empirical scaling closely follows the theoretical scaling of $D \ln(D)$. \textbf{(A)} For a fixed $\d = 15$, as $T$ increases, the test accuracy on unseen tasks remains similar, even though the total number of tasks ($\sim \d^T$) grows exponentially with $T$.  \textbf{(B)} For a fixed secret length is $3$, as $\d$ increases, the number of tasks grows polynomially with $\d$, yet the number of tasks required to generalize reasonably to unseen tasks remains in $\propto \d \log \d$.}
    \label{fig:ood_generalization}
    \vspace{-3mm}
\end{figure*}

\paragraph{Training Data Generation.} We split both the task space and input space into training and testing. In other words, the parity tasks are split into $\mathcal{F}_{\text{train}}$ and $\mathcal{F}_{\text{test}}$; and the binary sequences are split into $\mathcal{X}_{\text{train}}$ and $\mathcal{X}_{\text{test}}$. The split in the input space helps us monitor the in-distribution training process while as the split in the task space aims to measure generalization to unseen parity functions.

To construct the ICL data, we sample $m$ points $\bm{x}_1, \dots, \bm{x}_m$ uniformly at random from $\mathcal{X}_{\text{train}}$. Similarly, we sample a function $f$ uniformly at random from $\mathcal{F}_{\text{train}}$, and generate the sequence $(\bm{x}_1, f(\bm{x}_1), \dots, \bm{x}_m, f(\bm{x}_m))$. This process is repeated $N$ times, each time with a fresh sample of $m$ points and a new function $f$. 

\vspace{-2mm}
\paragraph{Evaluating Task Generalization.} For evaluation, we sample $f$ randomly from the held-out tasks $\mathcal{F}_{test}$ and sample inputs  uniformly at random from $\mathcal{X}_{\text{test}}$. We report the accuracy of the prediction $f(\bm{x}_m)$ given the demonstration $(\bm  x_1, f(\bm{x}_1), \dots, \bm{x}_{m-1}, f(\bm{x}_{m-1}), \bm{x}_m)$. This setting challenges the model to generalize to novel tasks beyond those encountered during training.

\vspace{-2mm}
\subsection{Experimental Results}
 As discussed in Section~\ref{sec: example: parity}, introducing CoT transforms the parity problem class \( Parity(d,k) \) into an AutoRegressive Compositional structure \( ARC(\d,T) \) with $\d = d$ and \( T = k \). A key empirical question is: 

\textit{how does the number of training tasks scale with \( d \) and \( T \) to achieve a fixed target accuracy on unseen task sets? }

To investigate this, we conduct experiments on:
\vspace{-3mm}
\begin{enumerate}
    \item Scaling $T (=k)$, i.e. the length of secret indices. 
    \item Scaling $\d (=d)$, i.e. the ambient dimension of input.
\end{enumerate}

\paragraph{Scaling \( T \) for a Fixed \( \d \).} 
We examine how the number of training tasks affects test accuracy while keeping the ambient dimension fixed at \( d = 15 \). Specifically, we evaluate test accuracy for \( k = \{3,4,5,6,7\} \) under varying numbers of training tasks. As \( k \) increases, the size of the parity class grows significantly—from approximately $500$ for \( k = 3 \) to around $6500$ for \( k = 7 \). 

Remarkably, despite this increase, the test accuracy follows a similar trajectory. With just \( 3d \ln(d) \approx 122 \) training tasks, the model generalizes to unseen cases with high accuracy (\( >95\% \)). For \( k = 7 \), this means training on 122 tasks enables generalization to about 6,400 unseen ones! This empirical results suggests that the required number of training tasks remains roughly the same, regardless of \( k \), consistent with the theoretical scaling of \( \tilde{O}(d) \) tasks.

\paragraph{Scaling \( \d \) for a fixed \( T \).}  
We examine the effect of increasing \( d \in \{10, 15, 20\} \) while keeping \( k = 3 \) fixed. For each \( d \), we train on a total number of tasks proportional to \( d \ln(d) \), up to \( 4\times d \ln(d) \). Figure \ref{fig:ood_generalization}, Panel B, shows similar task generalization performance across different ambien dimension of \( d \), providing further evidence that \( \tilde{{O}}(d) \) i.i.d. training tasks are sufficient for generalization to unseen tasks on praity functions with CoT.

\paragraph{Scaling \( \d \) and \( T \) together.}  
We examine the effect of jointly increasing \( d \) and \( k \), with \( (d,k) \in \{(10,5), (15,7), (20,10), (25,12), (30,15)\} \). For each \( d \), the model is trained on a total of \( 3 \times d \ln(d) \) i.i.d. tasks. Table~\ref{table:scale_d_k} shows that generalization performance remains consistent across these settings, providing further evidence that \( \tilde{O}(d) \) training tasks are sufficient to generalize to unseen parity tasks using CoT prompting. As a concrete example, for \( d = 30, k = 15 \), training on just 306 tasks enables generalization to approximately 155 million unseen tasks.

\begin{table}[h!]
\centering
\resizebox{0.5\textwidth}{!}{  
\begin{tabular}{ccS[table-format=3.0]S[table-format=9.0]S[table-format=2.2]}
\toprule
$d$ & $k$ & \# Training Tasks & \# Total Tasks & \multicolumn{1}{c}{Accuracy (\%)} \\
\midrule
10 & 5  &  69       &       252        & 98.51 \\
15 & 7  & 121       &     6400         & 99.12 \\
20 & 10 & 180       &   185000         & 98.67 \\
25 & 12 & 241       & 3200000          & 98.60 \\
30 & 15 & 306       & 155000000        & 98.10 \\
\bottomrule
\end{tabular}
}
\caption{
Task generalization performance as \(d\) and \(k\) increase. Training on only \(\tilde{O}(d)\) tasks enables generalization to exponentially many unseen tasks in the parity function family.
}

\label{table:scale_d_k}
\end{table}

\paragraph{Subtask Identification via Linear Probing.}
Finally, we probe  the hidden representations of the Transformer (see \cite{alain2016understanding}) to see if it \textit{identifies and then executes subtasks} at the inference time, consistent with the framework of Section~\ref{sec:theory}. Specifically, we add a linear classifier to the final attention layer's hidden state when producing the $i$-th token in the Chain-of-Thought, aiming to predict the $i$-th secret index. Only the linear classifier is trained, while all Transformer parameters remain frozen.
Table~\ref{tab: linear probing} shows that for $d\in\{10,15,20\}$, $k=3$, the linear probe consistently achieves high accuracy at all the coordinates. This suggests that at each CoT step, the model indeed first  infers the relevant subtask (the secret index) from the in-context examples and then executes that subtask to generate the output token—an ability it acquires during training. Further experimental details appear in Appendix \ref{appendix: experiment details}.

\begin{table}[h!]
\centering
\resizebox{0.32\textwidth}{!}{  
\begin{tabular}{cS[table-format=3.2]S[table-format=3.2]S[table-format=3.2]}
\toprule
$d$ & \multicolumn{1}{c}{Token 1} & \multicolumn{1}{c}{Token 2} & \multicolumn{1}{c}{Token 3} \\
\midrule
10 & 100.00\% & 99.83\% & 91.08\% \\
15 &  95.42\% & 99.97\% & 98.73\% \\
20 &  97.38\% & 95.54\% & 91.92\% \\
\bottomrule
\end{tabular}
}
\caption{
Validation accuracy (\%) of linear probes trained to predict each secret token position from the final hidden state.
}
\label{tab: linear probing}
\end{table}


\vspace{-3mm}
\section{ Task Generalization Beyond i.i.d. Sampling and Parity Functions
}\label{sec:Discussion}

In this section, we extend our experiments beyond i.i.d. task sampling and parity functions. We show an adversarial example where biased task selection substantially hinders task generalization for sparse parity problem. In addition, we demonstrate that exponential task scaling extends to a non-parity tasks including arithmetic and multi-step language translation.

\subsection{Task Generalization Beyond i.i.d. Task Sampling }\label{sec: Experiment beyond iid sampling}


In previous sections, we focused on \textit{i.i.d. settings}, where the set of training tasks $\mathcal{F}_{train}$ were sampled uniformly at random from the entire class $\mathcal{F}$. Here, we explore scenarios that deliberately break this uniformity to examine the effect of task selection on out-of-distribution (OOD) generalization.

\textit{How does the selection of training tasks influence a model’s ability to generalize to unseen tasks? Can we predict which setups are more prone to failure?}

\noindent To investigate this, we consider two cases parity problems with \( d = 10 \) and \( k = 3 \), where each task is represented by its tuple of secret indices \( (s_1, s_2, s_3) \):

\begin{enumerate}[leftmargin=0.4 cm]
    \item \textbf{Generalization with a Missing Coordinate.} In this setup, we exclude all training tasks where the second coordinate takes the value \( s_2 = 5 \), such as \( (1,5,7) \). At test time, we evaluate whether the model can generalize to unseen tasks where \( s_2 = 5 \) appears.
    \item \textbf{Generalization with Missing Pair.} Here, we remove all training tasks that contain both \( 4 \) \textit{and} \( 6 \) in the tuple \( (s_1, s_2, s_3) \), such as \( (2,4,6) \) and \( (4,5,6) \). At test time, we assess whether the model can generalize to tasks where both \( 4 \) and \( 6 \) appear together.
\end{enumerate}

\noindent \textbf{If you had to guess.} Which scenario is more challenging for generalization to unseen tasks? We provide the experimental result in Table~\ref{tab:generalization_results}.



In the first scenario, despite being trained on all tasks except those where \( s_2 = 5 \), which is of size $O(\d^T)$, the model struggles to generalize to these excluded cases, with prediction at chance level. This is intriguing as one may expect model to generalize across position. The failure  suggests that positional diversity plays a crucial role in the task generalization of Transformers. 

In contrast, in the second scenario, though the model has never seen tasks with both \( 4 \) \textit{and} \( 6 \) together, it has encountered individual instances where \( 4 \) appears in the second position (e.g., \( (1,4,5) \)) or where \( 6 \) appears in the third position (e.g., \( (2,3,6) \)). This exposure appears to facilitate generalization to test cases where both \( 4 \) \textit{and} \( 6 \) are present.

\begin{table*}[t!]
    \centering
    \caption{Generalization Results for Scenarios 1 and 2 for $d=10, k=3$.}
    \resizebox{0.85\textwidth}{!}{  
        \begin{tabular}{|c|c|c|}
            \hline
            \textbf{Scenario}  & \textbf{Tasks excluded from training} & \textbf{Generalization accuracy} \\
            \hline
            Generalization with Missing Pair & $\{4,6\} \subseteq \{s_1, s_2, s_3\}$ & 96.2\%\\ 
            \hline
            Generalization with Missing Coordinate & \( s_2 = 5 \) & 45.6\% \\ 
            \hline
        \end{tabular}
    }
    \label{tab:generalization_results}
\end{table*}

As a result, when the training tasks are not i.i.d, an adversarial selection such as exclusion of specific positional configurations may lead to failure to unseen task generalization even though the size of $\mathcal{F}_{train}$ is exponentially large.

\subsection{Task Generalization Beyond Parity Problems}



\subsubsection{Arithmetic Task}\label{subsec:arithmetic}

We introduce the family of \textit{Arithmetic} task that, like the sparse parity problem, operates on 
\( d \) binary inputs \( b_1, b_2, \dots, b_d \). The task involves computing a structured arithmetic expression over these inputs using a sequence of addition and multiplication operations.
\newcommand{\op}{\textrm{op}}

Formally, we define the function:
\vspace{-1.7mm}
\[
\text{Arithmetic}_{S} \colon \{0,1\}^d \to \{0,1,\dots,d\},
\]
where \( S = (\op_1, \op_2, \dots, \op_{d-1}) \) is a sequence of \( d-1 \) operations, each \( \op_k \) chosen from \( \{+, \times\} \). The function evaluates the expression by applying the operations sequentially from left-to-right order: for example, if \( S = (+, \times, +) \), then the arithmetic function would compute $\text{Arithmetic}_{S}(b_1, b_2, b_3, b_4) = ((b_1 + b_2) \times b_3) + b_4.$





By introducing a step-by-step CoT, arithmetic class belongs to $ARC(2, d-1)$: this is because at every step, there is only $\d = |\Theta_t| = 2$ choices (either $+$ or $\times$) while the length is  $T = d-1$, resulting a total number of $2^{d-1}$ tasks.

\begin{minipage}{0.5\textwidth}  
    Task generalization for the arithmetic task with CoT. It has $d =2$ and $T = d-1$ as the ambient dimension, hence $D\ln(DT) = 2\ln(2T)$. We show that the empirical scaling closely follows the theoretical scaling.
\end{minipage}
\hfill
\begin{minipage}{0.4\textwidth}  
    \centering
    \includegraphics[width=\textwidth]{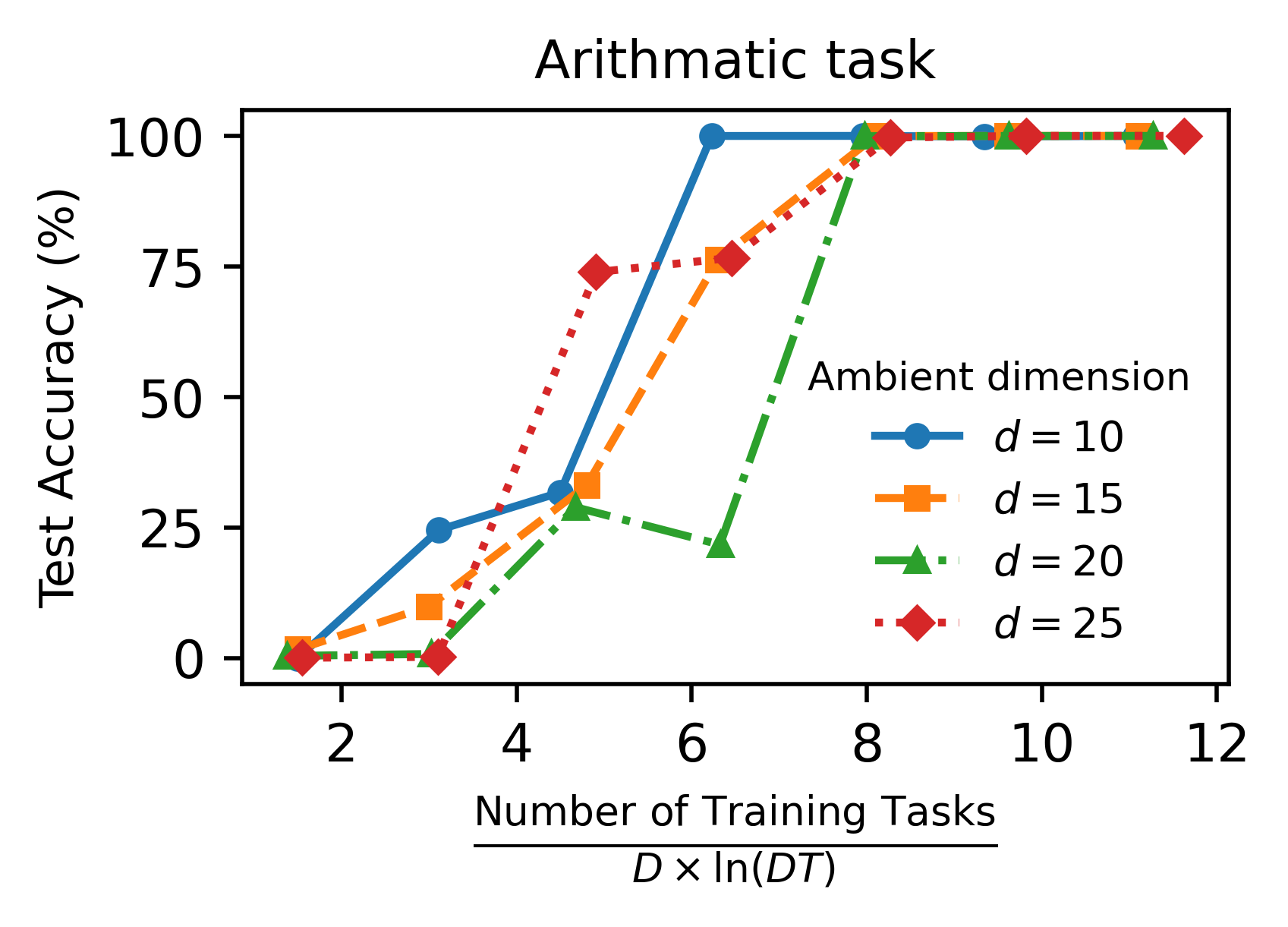}
    \refstepcounter{figure}  
    \label{fig:arithmetic}  
\end{minipage}
\vspace{-6mm}

Notably, when scaling with \( T \), we observe in the figure above that the task scaling closely follow the theoretical $O(D\log(DT))$ dependency. Given that the function class grows exponentially as \( 2^T \), it is truly remarkable that training on only a few hundred tasks enables generalization to an exponentially larger space—on the order of \( 2^{25} > 33 \) Million tasks. This exponential scaling highlights the efficiency of structured learning, where a modest number of training examples can yield vast generalization capability.






 \begin{figure*}[th!]
    \centering
    \includegraphics[width=0.9\textwidth]{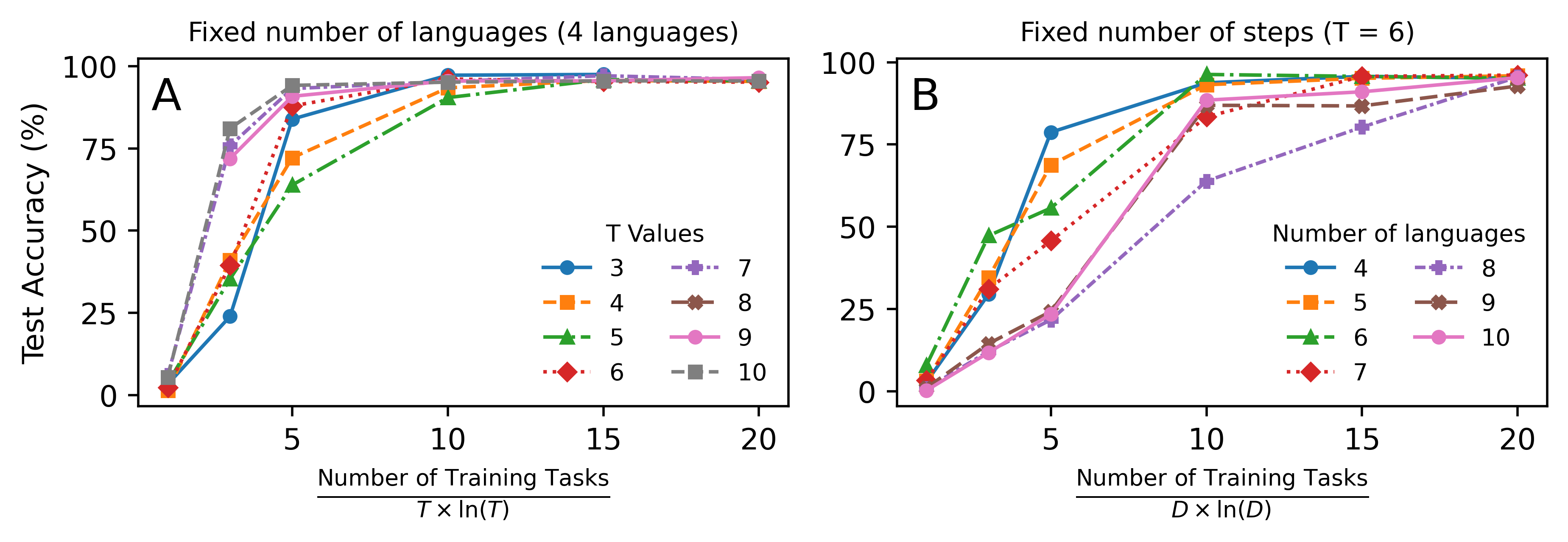}
    \vspace{-0.2cm}
    \caption{Task generalization for language translation task: $\d$ is the number of languages and $T$ is the length of steps.}
    \vspace{-2mm}
    \label{fig:language}
\end{figure*}

\subsubsection{Multi-Step Language Translation Task}

In this task, we study a sequential translation process across multiple languages~\cite{garg2022can}. Given a set of \( D \) languages, we construct a translation chain by randomly sampling a sequence of \( T \) languages \textbf{with replacement}:  \(L_1, L_2, \dots, L_T,\)
where each \( L_t \) is a sampled language. Starting with a word, we iteratively translate it through the sequence:
\vspace{-2mm}
\[
L_1 \to L_2 \to L_3 \to \dots \to L_T.
\]
For example, if the sampled sequence is EN → FR → DE → FR, translating the word "butterfly" follows:
\vspace{-1mm}
\[
\text{butterfly} \to \text{papillon} \to \text{schmetterling} \to \text{papillon}.
\]
This task follows an \textit{AutoRegressive Compositional} structure by itself, specifically \( ARC(D, T-1) \), where at each step, the conditional generation only depends on the target language, making \( D \) as the number of languages and the total number of possible tasks is \( D^{T-1} \). This example illustrates that autoregressive compositional structures naturally arise in real-world languages, even without explicit CoT. 

We examine task scaling along \( D \) (number of languages) and \( T \) (sequence length). As shown in Figure~\ref{fig:language}, empirical  \( D \)-scaling closely follows the theoretical \( O(D \ln D T) \). However, in the \( T \)-scaling case, we observe a linear dependency on \( T \) rather than the logarithmic dependency \(O(\ln T) \). A possible explanation is error accumulation across sequential steps—longer sequences require higher precision in intermediate steps to maintain accuracy. This contrasts with our theoretical analysis, which focuses on asymptotic scaling and does not explicitly account for compounding errors in finite-sample settings.


Despite this, the task scaling is still remarkable — training on a few hundred tasks enables generalization to   $4^{10} \approx 10^6$ tasks!

\vspace{-1mm}
\section{Conclusions}

In this work, we quantitatively investigated task generalization under the autoregressive compositional structure, demonstrating both theoretically and empirically that exponential task generalization to $D^T$ tasks can be achieved with training on only $\tilde{O}(D)$ tasks. 
To summerize:

\vspace{-4mm}
\begin{itemize}
    \item \textbf{Theoretical Framework for Task Generalization.} We introduced the \textit{AutoRegressive Compositional} (ARC) framework to model structured task learning, demonstrating that a model trained on only $\tilde{O}(D)$ tasks can generalize to an exponentially large space of $D^T$ tasks.
    
    \item \textbf{Formal Sample Complexity Bound.} We established a fundamental scaling law that quantifies the number of tasks required for generalization, proving that exponential generalization is theoretically achievable with only a logarithmic increase in training samples.
    
    \item \textbf{Empirical Validation on Parity Functions.} We showed that Transformers struggle with standard in-context learning (ICL) on parity tasks but achieve exponential generalization when Chain-of-Thought (CoT) reasoning is introduced. Our results provide the first empirical demonstration of structured learning enabling efficient generalization in this setting.
    
    \item \textbf{Scaling Laws in Arithmetic and Language Translation.} Extending beyond parity functions, we demonstrated that the same compositional principles hold for arithmetic operations and multi-step language translation, confirming that structured learning significantly reduces the task complexity required for generalization.
    
    \item \textbf{Impact of Training Task Selection.} We analyzed how different task selection strategies affect generalization, showing that adversarially chosen training tasks can hinder generalization, while diverse training distributions promote robust learning across unseen tasks.
\end{itemize}

We introduce a framework for studying the role of compositionality in learning tasks and how this structure can significantly enhance generalization to unseen tasks. Additionally, we provide empirical evidence on learning tasks, such as the parity problem, demonstrating that transformers follow the scaling behavior predicted by our compositionality-based theory. Future research will  explore how these principles extend to real-world applications such as program synthesis, mathematical reasoning, and decision-making tasks.

By establishing a principled framework for task generalization, our work advances the understanding of how models can learn efficiently beyond supervised training and adapt to new task distributions. We hope these insights will inspire further research into the mechanisms underlying task generalization and compositional generalization.

\newpage

\section*{Acknowledgements}
We acknowledge support from the National Science Foundation (NSF) and the Simons Foundation for the Collaboration on the Theoretical Foundations of Deep Learning through awards DMS-2031883 and \#814639 as well as the  TILOS institute (NSF CCF-2112665) and the Office of Naval Research (ONR N000142412631). 
This work used the programs (1) XSEDE (Extreme science and engineering discovery environment)  which is supported by NSF grant numbers ACI-1548562, and (2) ACCESS (Advanced cyberinfrastructure coordination ecosystem: services \& support) which is supported by NSF grants numbers \#2138259, \#2138286, \#2138307, \#2137603, and \#2138296. Specifically, we used the resources from SDSC Expanse GPU compute nodes, and NCSA Delta system, via allocations TG-CIS220009.

\section*{Impact Statement} 

This paper advances the theoretical understanding of task generalization, investigating how learning on a small subset of tasks enables generalization to a much larger set of unseen tasks. Our study is primarily theoretical, supported by synthetic experiments designed to isolate key structural properties of task composition. By analyzing autoregressive compositional structures, we provide insights into sample complexity and task transferability. While our work lays foundational groundwork for broader applications in machine learning, its immediate impact remains theoretical, with no direct societal implications requiring emphasis.

\bibliographystyle{icml2025}
\bibliography{ref.bib}

\begin{thebibliography}{38}
\providecommand{\natexlab}[1]{#1}
\providecommand{\url}[1]{\texttt{#1}}
\expandafter\ifx\csname urlstyle\endcsname\relax
  \providecommand{\doi}[1]{doi: #1}\else
  \providecommand{\doi}{doi: \begingroup \urlstyle{rm}\Url}\fi

\bibitem[Abbe et~al.(2024)Abbe, Bengio, Lotfi, Sandon, and Saremi]{abbe2024far}
Abbe, E., Bengio, S., Lotfi, A., Sandon, C., and Saremi, O.
\newblock How far can transformers reason? the locality barrier and inductive
  scratchpad.
\newblock \emph{Advances in Neural Information Processing Systems}, 2024.

\bibitem[Alain \& Bengio(2017)Alain and Bengio]{alain2016understanding}
Alain, G. and Bengio, Y.
\newblock Understanding intermediate layers using linear classifier probes.
\newblock In \emph{5th International Conference on Learning Representations,
  {ICLR} 2017, Toulon, France, April 24-26, 2017, Workshop Track Proceedings},
  2017.

\bibitem[An et~al.(2023)An, Lin, Fu, Chen, Zheng, Lou, and
  Zhang]{an2023context}
An, S., Lin, Z., Fu, Q., Chen, B., Zheng, N., Lou, J.-G., and Zhang, D.
\newblock How do in-context examples affect compositional generalization?
\newblock In \emph{Proceedings of the 61st Annual Meeting of the Association
  for Computational Linguistics (Volume 1: Long Papers)}, pp.\  11027--11052,
  2023.

\bibitem[Arjovsky et~al.(2019)Arjovsky, Bottou, Gulrajani, and
  Lopez-Paz]{arjovsky2019invariant}
Arjovsky, M., Bottou, L., Gulrajani, I., and Lopez-Paz, D.
\newblock Invariant risk minimization.
\newblock \emph{arXiv preprint arXiv:1907.02893}, 2019.

\bibitem[Arora \& Goyal(2023)Arora and Goyal]{arora2023theory}
Arora, S. and Goyal, A.
\newblock A theory for emergence of complex skills in language models.
\newblock \emph{arXiv preprint arXiv:2307.15936}, 2023.

\bibitem[Bai et~al.(2023)Bai, Chen, Wang, Xiong, and Mei]{bai2023transformers}
Bai, Y., Chen, F., Wang, H., Xiong, C., and Mei, S.
\newblock Transformers as statisticians: Provable in-context learning with
  in-context algorithm selection.
\newblock In \emph{Thirty-seventh Conference on Neural Information Processing
  Systems}, 2023.
\newblock URL \url{https://openreview.net/forum?id=liMSqUuVg9}.

\bibitem[Ben-David \& Urner(2014)Ben-David and Urner]{ben2014domain}
Ben-David, S. and Urner, R.
\newblock Domain adaptation--can quantity compensate for quality?
\newblock \emph{Annals of Mathematics and Artificial Intelligence},
  70:\penalty0 185--202, 2014.

\bibitem[Bhattamishra et~al.(2024)Bhattamishra, Patel, Blunsom, and
  Kanade]{bhattamishra2024understanding}
Bhattamishra, S., Patel, A., Blunsom, P., and Kanade, V.
\newblock Understanding in-context learning in transformers and {LLM}s by
  learning to learn discrete functions.
\newblock In \emph{The Twelfth International Conference on Learning
  Representations}, 2024.
\newblock URL \url{https://openreview.net/forum?id=ekeyCgeRfC}.

\bibitem[Brown et~al.(2020)Brown, Mann, Ryder, Subbiah, Kaplan, Dhariwal,
  Neelakantan, Shyam, Sastry, Askell, et~al.]{brown2020language}
Brown, T., Mann, B., Ryder, N., Subbiah, M., Kaplan, J.~D., Dhariwal, P.,
  Neelakantan, A., Shyam, P., Sastry, G., Askell, A., et~al.
\newblock Language models are few-shot learners.
\newblock \emph{Advances in neural information processing systems},
  33:\penalty0 1877--1901, 2020.

\bibitem[Cortes et~al.(2010)Cortes, Mansour, and Mohri]{cortes2010learning}
Cortes, C., Mansour, Y., and Mohri, M.
\newblock Learning bounds for importance weighting.
\newblock \emph{Advances in neural information processing systems}, 23, 2010.

\bibitem[Du et~al.(2017)Du, Koushik, Singh, and P{\'o}czos]{du2017hypothesis}
Du, S.~S., Koushik, J., Singh, A., and P{\'o}czos, B.
\newblock Hypothesis transfer learning via transformation functions.
\newblock \emph{Advances in neural information processing systems}, 30, 2017.

\bibitem[Duchi et~al.(2023)Duchi, Hashimoto, and
  Namkoong]{duchi2023distributionally}
Duchi, J., Hashimoto, T., and Namkoong, H.
\newblock Distributionally robust losses for latent covariate mixtures.
\newblock \emph{Operations Research}, 71\penalty0 (2):\penalty0 649--664, 2023.

\bibitem[Dziri et~al.(2023)Dziri, Lu, Sclar, Li, Jiang, Lin, Welleck, West,
  Bhagavatula, Le~Bras, et~al.]{dziri2023faith}
Dziri, N., Lu, X., Sclar, M., Li, X.~L., Jiang, L., Lin, B.~Y., Welleck, S.,
  West, P., Bhagavatula, C., Le~Bras, R., et~al.
\newblock Faith and fate: Limits of transformers on compositionality.
\newblock \emph{Advances in Neural Information Processing Systems},
  36:\penalty0 70293--70332, 2023.

\bibitem[Garg et~al.(2022)Garg, Tsipras, Liang, and Valiant]{garg2022can}
Garg, S., Tsipras, D., Liang, P.~S., and Valiant, G.
\newblock What can transformers learn in-context? a case study of simple
  function classes.
\newblock \emph{Advances in Neural Information Processing Systems},
  35:\penalty0 30583--30598, 2022.

\bibitem[Kaur et~al.(2025)Kaur, Park, Goyal, and Arora]{kaur2024instruct}
Kaur, S., Park, S., Goyal, A., and Arora, S.
\newblock Instruct-skillmix: A powerful pipeline for llm instruction tuning.
\newblock In \emph{The Thirteenth International Conference on Learning
  Representations}, 2025.

\bibitem[Keysers et~al.(2020)Keysers, Sch{\"a}rli, Scales, Buisman, Furrer,
  Kashubin, Momchev, Sinopalnikov, Stafiniak, Tihon, Tsarkov, Wang, van Zee,
  and Bousquet]{keysers2019measuring}
Keysers, D., Sch{\"a}rli, N., Scales, N., Buisman, H., Furrer, D., Kashubin,
  S., Momchev, N., Sinopalnikov, D., Stafiniak, L., Tihon, T., Tsarkov, D.,
  Wang, X., van Zee, M., and Bousquet, O.
\newblock Measuring compositional generalization: A comprehensive method on
  realistic data.
\newblock In \emph{International Conference on Learning Representations}, 2020.

\bibitem[Kingma \& Ba(2015)Kingma and Ba]{kingma2014adam}
Kingma, D.~P. and Ba, J.
\newblock Adam: A method for stochastic optimization.
\newblock In \emph{International Conference on Learning Representations}, 2015.

\bibitem[Kpotufe \& Martinet(2021)Kpotufe and Martinet]{kpotufe2021marginal}
Kpotufe, S. and Martinet, G.
\newblock Marginal singularity and the benefits of labels in covariate-shift.
\newblock \emph{The Annals of Statistics}, 49\penalty0 (6):\penalty0
  3299--3323, 2021.

\bibitem[Lake \& Baroni(2018)Lake and Baroni]{lake2018generalization}
Lake, B. and Baroni, M.
\newblock Generalization without systematicity: On the compositional skills of
  sequence-to-sequence recurrent networks.
\newblock In \emph{International conference on machine learning}, pp.\
  2873--2882. PMLR, 2018.

\bibitem[Lei et~al.(2021)Lei, Hu, and Lee]{lei2021near}
Lei, Q., Hu, W., and Lee, J.
\newblock Near-optimal linear regression under distribution shift.
\newblock In \emph{International Conference on Machine Learning}, pp.\
  6164--6174. PMLR, 2021.

\bibitem[Li et~al.(2023)Li, Ildiz, Papailiopoulos, and
  Oymak]{li2023transformers}
Li, Y., Ildiz, M.~E., Papailiopoulos, D., and Oymak, S.
\newblock Transformers as algorithms: Generalization and stability in
  in-context learning.
\newblock In \emph{International Conference on Machine Learning}, pp.\
  19565--19594. PMLR, 2023.

\bibitem[Lippl \& Stachenfeld(2025)Lippl and Stachenfeld]{lippl2024does}
Lippl, S. and Stachenfeld, K.
\newblock When does compositional structure yield compositional generalization?
  a kernel theory.
\newblock In \emph{The Thirteenth International Conference on Learning
  Representations}, 2025.

\bibitem[Ma et~al.(2023)Ma, Pathak, and Wainwright]{ma2023optimally}
Ma, C., Pathak, R., and Wainwright, M.~J.
\newblock Optimally tackling covariate shift in rkhs-based nonparametric
  regression.
\newblock \emph{The Annals of Statistics}, 51\penalty0 (2):\penalty0 738--761,
  2023.

\bibitem[Madry et~al.(2018)Madry, Makelov, Schmidt, Tsipras, and
  Vladu]{madry2017towards}
Madry, A., Makelov, A., Schmidt, L., Tsipras, D., and Vladu, A.
\newblock Towards deep learning models resistant to adversarial attacks.
\newblock In \emph{International Conference on Learning Representations}, 2018.

\bibitem[Mansour et~al.(2009)Mansour, Mohri, and
  Rostamizadeh]{mansour2009domain}
Mansour, Y., Mohri, M., and Rostamizadeh, A.
\newblock Domain adaptation: Learning bounds and algorithms.
\newblock In \emph{Annual Conference on Learning Theory}, 2009.

\bibitem[Raghunathan et~al.(2020)Raghunathan, Xie, Yang, Duchi, and
  Liang]{raghunathan2020understanding}
Raghunathan, A., Xie, S.~M., Yang, F., Duchi, J., and Liang, P.
\newblock Understanding and mitigating the tradeoff between robustness and
  accuracy.
\newblock In \emph{International Conference on Machine Learning}, pp.\
  7909--7919. PMLR, 2020.

\bibitem[Saparov et~al.(2023)Saparov, Pang, Padmakumar, Joshi, Kazemi, Kim, and
  He]{saparov2023testing}
Saparov, A., Pang, R.~Y., Padmakumar, V., Joshi, N., Kazemi, M., Kim, N., and
  He, H.
\newblock Testing the general deductive reasoning capacity of large language
  models using ood examples.
\newblock \emph{Advances in Neural Information Processing Systems},
  36:\penalty0 3083--3105, 2023.

\bibitem[Schug et~al.(2024)Schug, Kobayashi, Akram, Wolczyk, Proca, Von~Oswald,
  Pascanu, Sacramento, and Steger]{schug2023discovering}
Schug, S., Kobayashi, S., Akram, Y., Wolczyk, M., Proca, A.~M., Von~Oswald, J.,
  Pascanu, R., Sacramento, J., and Steger, A.
\newblock Discovering modular solutions that generalize compositionally.
\newblock In \emph{International Conference on Learning Representations}, 2024.

\bibitem[Song et~al.(2025)Song, Xu, and Zhong]{song2024out}
Song, J., Xu, Z., and Zhong, Y.
\newblock Out-of-distribution generalization via composition: a lens through
  induction heads in transformers.
\newblock \emph{Proceedings of the National Academy of Sciences}, 122\penalty0
  (6):\penalty0 e2417182122, 2025.

\bibitem[Sugiyama et~al.(2007)Sugiyama, Krauledat, and
  M{\"u}ller]{sugiyama2007covariate}
Sugiyama, M., Krauledat, M., and M{\"u}ller, K.-R.
\newblock Covariate shift adaptation by importance weighted cross validation.
\newblock \emph{Journal of Machine Learning Research}, 8\penalty0 (5), 2007.

\bibitem[Wei et~al.(2022)Wei, Tay, Bommasani, Raffel, Zoph, Borgeaud, Yogatama,
  Bosma, Zhou, Metzler, Chi, Hashimoto, Vinyals, Liang, Dean, and
  Fedus]{wei2022emergent}
Wei, J., Tay, Y., Bommasani, R., Raffel, C., Zoph, B., Borgeaud, S., Yogatama,
  D., Bosma, M., Zhou, D., Metzler, D., Chi, E.~H., Hashimoto, T., Vinyals, O.,
  Liang, P., Dean, J., and Fedus, W.
\newblock Emergent abilities of large language models.
\newblock \emph{Transactions on Machine Learning Research}, 2022.
\newblock ISSN 2835-8856.
\newblock URL \url{https://openreview.net/forum?id=yzkSU5zdwD}.
\newblock Survey Certification.

\bibitem[Wen et~al.(2025)Wen, Zhang, Lin, and Zhang]{wen2024sparse}
Wen, K., Zhang, H., Lin, H., and Zhang, J.
\newblock From sparse dependence to sparse attention: Unveiling how
  chain-of-thought enhances transformer sample efficiency.
\newblock In \emph{The Thirteenth International Conference on Learning
  Representations}, 2025.

\bibitem[Wiedemer et~al.(2024)Wiedemer, Mayilvahanan, Bethge, and
  Brendel]{wiedemer2024compositional}
Wiedemer, T., Mayilvahanan, P., Bethge, M., and Brendel, W.
\newblock Compositional generalization from first principles.
\newblock \emph{Advances in Neural Information Processing Systems}, 36, 2024.

\bibitem[Wolf et~al.(2020)Wolf, Debut, Sanh, Chaumond, Delangue, Moi, Cistac,
  Rault, Louf, Funtowicz, and Brew]{wolf2020transformers}
Wolf, T., Debut, L., Sanh, V., Chaumond, J., Delangue, C., Moi, A., Cistac, P.,
  Rault, T., Louf, R., Funtowicz, M., and Brew, J.
\newblock Transformers: State-of-the-art natural language processing.
\newblock In \emph{Proceedings of the 2020 Conference on Empirical Methods in
  Natural Language Processing: System Demonstrations}, pp.\  38--45, 2020.

\bibitem[Xu et~al.(2024)Xu, Shi, and Liang]{xu2024do}
Xu, Z., Shi, Z., and Liang, Y.
\newblock Do large language models have compositional ability? an investigation
  into limitations and scalability.
\newblock In \emph{Conference on Language Modeling}, 2024.

\bibitem[Ye et~al.(2021)Ye, Xie, Cai, Li, Li, and Wang]{ye2021towards}
Ye, H., Xie, C., Cai, T., Li, R., Li, Z., and Wang, L.
\newblock Towards a theoretical framework of out-of-distribution
  generalization.
\newblock \emph{Advances in Neural Information Processing Systems},
  34:\penalty0 23519--23531, 2021.

\bibitem[Zhao et~al.(2024)Zhao, Kaur, Yu, Goyal, and Arora]{zhao2024can}
Zhao, H., Kaur, S., Yu, D., Goyal, A., and Arora, S.
\newblock Can models learn skill composition from examples?
\newblock \emph{Advances in Neural Information Processing Systems},
  37:\penalty0 102393--102427, 2024.

\bibitem[Zhou et~al.(2022)Zhou, Liu, Qiao, Xiang, and Loy]{zhou2022domain}
Zhou, K., Liu, Z., Qiao, Y., Xiang, T., and Loy, C.~C.
\newblock Domain generalization: A survey.
\newblock \emph{IEEE Transactions on Pattern Analysis and Machine
  Intelligence}, 45\penalty0 (4):\penalty0 4396--4415, 2022.

\end{thebibliography}

\newpage
\appendix
\onecolumn
\newcommand{\supp}{\mathrm{supp}}
\section{Proofs in \cref{sec:theory}}

\paragraph{Notations}
For a distribution over finite class $\fS$, denote $\supp(P)\subseteq \fS$ as the support of $P$. For two distributions $P,Q$ over finite set $\fS$, denote the cross entropy as
\begin{align*}
H(P,Q)=-\sum_{x\in \fS }P(x)\log Q(x),
\end{align*}
and denote the KL divergence as
\begin{align*}
KL(P,Q)=-\sum_{x\in \fS }P(x)\log \frac{Q(x)}{P(x)}.
\end{align*}

\subsection{Proof of~\cref{thm:exponential task generalization}}\label{appendix:maintheorem}

To identify the true distribution $\tilde\theta$ from $\Theta$ at inference time, we need the following lemma to provide a non-asymptotic bound for distribution discrimination.

\begin{lemma}[Non-Asymptotic Discrimination of Two Distributions]\label{lem:dist_discrim}
Let $\fS$ be any finite set and $P, Q \in \Delta(\mathcal{S})$ be two distributions with total variation distance $\tv (P, Q) = c > 0$. Suppose we observe $n$ independent samples $X_1, \dots, X_n$ from an unknown distribution $Y$, where $Y$ is either $P$ or $Q$. Then, there exists a testing algorithm that identifies $Y$ with probability at least $1 - \delta$ provided that 
\[
n \geq \frac{2 \ln(1/\delta)}{c^2}.
\]
\end{lemma}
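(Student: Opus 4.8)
The plan is to prove Lemma~\ref{lem:dist_discrim} via a likelihood-ratio (Neyman--Pearson style) test, with the final concentration step handled by Hoeffding's inequality. First I would fix a set $A\subseteq\fS$ witnessing the total variation distance, namely $A=\{s\in\fS: P(s)\ge Q(s)\}$, so that $P(A)-Q(A)=\tv(P,Q)=c$. The natural test is then: given samples $X_1,\dots,X_n$, compute the empirical frequency $\hat p_A=\frac1n\sum_{i=1}^n\mathbf 1[X_i\in A]$, and declare $Y=P$ if $\hat p_A\ge \tfrac12(P(A)+Q(A))$ and $Y=Q$ otherwise. The threshold is the midpoint between the two means $P(A)$ and $Q(A)$, which are separated by exactly $c$.

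The key steps, in order, are: (1) record that $\mathbf 1[X_i\in A]$ are i.i.d. Bernoulli with mean $P(A)$ if $Y=P$ and mean $Q(A)$ if $Y=Q$; (2) observe that the test errs only if $\hat p_A$ deviates from its true mean by at least $c/2$ (if $Y=P$, an error needs $\hat p_A<\tfrac12(P(A)+Q(A))=P(A)-c/2$; symmetrically for $Y=Q$); (3) apply Hoeffding's inequality to the bounded i.i.d. summands in $[0,1]$ to get
\[
\Pr\bigl[\,|\hat p_A-\E\hat p_A|\ge c/2\,\bigr]\;\le\;2\exp\!\bigl(-2n(c/2)^2\bigr)\;=\;2\exp\!\bigl(-n c^2/2\bigr);
\]
(4) bound this by $\delta$ — actually one gets $2e^{-nc^2/2}$, so to be safe one sets $n\ge \frac{2\ln(2/\delta)}{c^2}$, or notes that a one-sided Hoeffding bound removes the factor of $2$ and yields exactly $e^{-nc^2/2}\le\delta$ whenever $n\ge\frac{2\ln(1/\delta)}{c^2}$, matching the stated bound. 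I would present the one-sided version: conditioned on $Y=P$, the error event is the one-sided deviation $\hat p_A-P(A)\le -c/2$, which has probability at most $\exp(-2n(c/2)^2)=\exp(-nc^2/2)$, and likewise conditioned on $Y=Q$; hence the worst-case error probability is at most $\exp(-nc^2/2)\le\delta$ under the hypothesis on $n$.

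The only genuine subtlety — and the step I'd expect to need the most care — is making sure the constant in the exponent is right so that the clean bound $n\ge 2\ln(1/\delta)/c^2$ (as used downstream in Theorem~\ref{thm:exponential task generalization} and Corollary~\ref{corollary:parity}, where $c=1/2$ gives $\ell\ge 8\ln(\cdot)$) comes out exactly. This forces the use of the \emph{one-sided} Hoeffding bound rather than the two-sided one, and a careful check that the threshold at the midpoint makes each of the two conditional error events a single one-sided tail of width exactly $c/2$. Everything else is routine: the definition of $A$, the identity $P(A)-Q(A)=c$, and the i.i.d. Bernoulli reduction. I would also remark that the test is agnostic to which of $P,Q$ is the truth and uses only knowledge of $P$ and $Q$ (to form $A$ and the threshold), which is all that is needed when this lemma is invoked at inference time to discriminate among the finitely many candidate subtask conditionals.
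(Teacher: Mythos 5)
Your proposal is correct and is essentially the same argument as the paper's: with $A=\{s:P(s)\ge Q(s)\}$, the paper's statistic $\phi=\frac1n\sum_i(-1)^{\mathbf 1[P(X_i)<Q(X_i)]}$ is exactly $2\hat p_A-1$, and its ``closer-mean'' decision rule coincides with your midpoint threshold, so the two tests are identical up to an affine reparametrization. The one-sided Hoeffding application and the resulting exponent $\exp(-nc^2/2)$ match the paper's exactly, yielding the same $n\ge 2\ln(1/\delta)/c^2$.
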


\begin{proof}
\textbf{Testing Procedure.}  
Define the testing statistic $\phi$ as follows:
\[
\phi = \frac{1}{n} \sum_{i=1}^n (-1)^{\mathbf{1}[P(X_i) < Q(X_i)]}.
\]
Equivalently, $\phi$ can be written in terms of the empirical distribution $\hat{Y}_n$:
\[
\phi = \sum_{x \in \mathcal{S}} \hat{Y}_n(x) \cdot (-1)^{\mathbf{1}[P(x) < Q(x)]},
\]
where $\hat{Y}_n(x) = \frac{1}{n} \sum_{i=1}^n \mathbf{1}[X_i = x]$.

Compute the expected values of $\phi$ under $Y = P$ and $Y = Q$:
\[
\mu_P = \sum_{x \in \mathcal{S}} P(x) \cdot (-1)^{\mathbf{1}[P(x) < Q(x)]}, \quad
\mu_Q = \sum_{x \in \mathcal{S}} Q(x) \cdot (-1)^{\mathbf{1}[P(x) < Q(x)]}.
\]
The algorithm reports $\hat{Y} = P$ if $|\phi - \mu_P| < |\phi - \mu_Q|$, and $\hat{Y} = Q$ otherwise.

\textbf{Proof of Correctness.}  
Without loss of generality, assume $Y = P$. We analyze the probability of error:
\[
\Pr\left( \hat{Y} \neq P \right) = \Pr\left( |\phi - \mu_P| \geq |\phi - \mu_Q| \right).
\]
Note that $\mathbb{E}[\phi] = \mu_P$ under $Y = P$. From the definition of total variation distance:
\[
\mu_Q - \mu_P = \sum_{x \in \mathcal{S}} (Q(x) - P(x)) \cdot (-1)^{\mathbf{1}[P(x) < Q(x)]} = 2 \cdot d_{\mathrm{TV}}(P, Q) = 2c.
\]
Thus, the error probability can be bounded as:
\[
\Pr\left( |\phi - \mu_P| \geq |\phi - \mu_Q| \right) \leq \Pr\left( \phi \geq \mu_P + c \right).
\]
Since $\phi$ is the average of $n$ independent random variables taking values in $\{-1, +1\}$, by Hoeffding's inequality:
\[
\Pr\left( \phi \geq \mu_P + c \right) \leq \exp\left( -\frac{c^2 n}{2} \right).
\]
Setting $\exp\left( -\frac{c^2 n}{2} \right) \leq \delta$ gives the required sample complexity $n \geq \frac{2 \ln(1/\delta)}{c^2}$.
\end{proof}

To prove Theorem \ref{thm:exponential task generalization}, we introduce the following lemma:

\begin{lemma}\label{lem:exponential task generalization}  
Consider a compositional task class $\mathcal{F}$ satisfying Assumption~\ref{assm: compositional structure}. Suppose the training tasks $\fF_\train= \{f_{\theta^i}\}_{i=1}^{n_\theta}$ satisfy the \textbf{per-component coverage} condition: for every timestep $t \in [T]$,
\[
\{P_{\theta^i_t}\}_{i=1}^{n_\theta} = \mathcal{P}_{\Theta_t}.
\]
Then there exists a learner $\mathcal{A}$ such that when trained on $\mathcal{D}_{\mathrm{train}} = \{\mathcal{D}_i\}_{i=1}^{n_\theta}$ with $\mathcal{D}_i \iid P_{\theta^i}$ where $\fD_i$ consists of $n_x$ i.i.d. samples, and given $\ell \geq \frac{2\ln(100Tn_\theta)}{c^2}$ inference-time demonstration samples i.i.d. sampled from unseen task $P_{\tilde \theta}$, denoted by $\fD_\demo$, 
then it holds that:
\[
\lim_{n_x \to \infty} \Pr\left[ \mathcal{A}(\mathcal{D}_{\demo}; \mathcal{D}_{\mathrm{train}}) \neq (P_{\tilde{\theta}_1}, \dots, P_{\tilde{\theta}_T}) \right] \leq 0.01,
\]
where the probability is over the randomness in $\mathcal{D}_{\mathrm{train}}$ and $\mathcal{D}_{\demo}$.
\end{lemma}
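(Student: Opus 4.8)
\textbf{Proof proposal for Lemma~\ref{lem:exponential task generalization}.}

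The plan is to analyze the two-stage learner $\mathcal{A}$ described after Theorem~\ref{thm:exponential task generalization}: a training stage that uses maximum-likelihood estimation on each $\mathcal{D}_i$ to recover the subtask tuple $(P_{\theta^i_1},\dots,P_{\theta^i_T})$ of each training task, followed by an inference stage that applies the total-variation discrimination test of Lemma~\ref{lem:dist_discrim} to the demonstration samples $\mathcal{D}_{\demo}$. I would structure the argument around a union bound over two bad events: (i) the training stage fails to correctly identify the subtask family $\mathcal{P}_{\Theta_t}$ at some step $t$, and (ii) given correct identification of $\mathcal{P}_{\Theta_t}$ for every $t$, the inference-stage discrimination test fails to pick out $P_{\tilde\theta_t}$ from the $\d$ candidates in $\mathcal{P}_{\Theta_t}$ for some $t$.

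For the training stage, I would argue step by step. Fix a timestep $t$. By the per-component coverage hypothesis, $\{P_{\theta^i_t}\}_{i=1}^{n_\theta} = \mathcal{P}_{\Theta_t}$, so every element of $\mathcal{P}_{\Theta_t}$ is the $t$-th subtask of at least one training task. For a given training task $f_{\theta^i}$, I would show that as $n_x\to\infty$, the MLE over the hypothesis class $\bigtimes_{s=1}^T \mathcal{P}_{\Xi_s}$ consistently recovers the true tuple $(P_{\theta^i_1},\dots,P_{\theta^i_T})$ with probability tending to $1$. The key inputs here are the first part of the Task Identifiability assumption — for any $\theta_{1:t-1}$, any true $\theta_t$, and any $\zeta_t\in\Xi_t$ with $P_{\zeta_t}\neq P_{\theta_t}$ one has $\tv(P_{\theta_{1:t-1},\theta_t},P_{\theta_{1:t-1},\zeta_t})>0$ — together with finiteness of $\mathcal{P}_{\Xi_t}$ (Assumption~\ref{assm: compositional structure}.1). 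Finiteness makes the sup over the hypothesis class a finite max, so standard MLE consistency (strict positivity of KL divergence between distinct candidates, then law of large numbers / Wald's argument applied to the finite collection of log-likelihood-ratio statistics) gives that the probability of misidentifying any one training task goes to $0$; taking $n_x\to\infty$ and a union bound over the $n_\theta$ training tasks, the probability that \emph{any} training task is misidentified tends to $0$, hence is eventually below $0.005$. After correct identification of all training tuples, the learner sets $\widehat{\mathcal{P}}_{\Theta_t} := \{P_{\theta^i_t}\}_{i=1}^{n_\theta}$, which by coverage equals the true $\mathcal{P}_{\Theta_t}$.

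For the inference stage, I would condition on the (high-probability) event that $\widehat{\mathcal{P}}_{\Theta_t}=\mathcal{P}_{\Theta_t}$ for all $t$, and run the discrimination test of Lemma~\ref{lem:dist_discrim} sequentially: having already pinned down $\tilde\theta_{1:t-1}$, the candidates for step $t$ are the $\d$ induced distributions $\{P_{\tilde\theta_{1:t-1},\theta_t'} : \theta_t'\in\Theta_t\}$, which by the second part of Task Identifiability are pairwise $\tv$-separated by at least $c$. There are at most $\binom{\d}{2}\le \d^2/2$ pairs at each of $T$ steps, so at most $T\d^2/2 \le T n_\theta^2$ (using $n_\theta\ge\d$) pairwise tests total; applying Lemma~\ref{lem:dist_discrim} with $\delta = 1/(100 T n_\theta)$ — which is exactly what $\ell \ge \frac{2\ln(100 T n_\theta)}{c^2}$ guarantees — and a union bound over all these pairwise comparisons bounds the inference failure probability by at most $T n_\theta^2 /(100 T n_\theta)$; I would sharpen the pair count (the sequential test only needs to compare the realized $P_{\tilde\theta_{1:t},\tilde\theta_t}$ against the other $\d-1$ candidates at each step, giving $\le T\d \le Tn_\theta$ tests) so that the bound becomes $Tn_\theta/(100Tn_\theta) = 0.01$, or more carefully $\le 0.005$, leaving total failure probability $\le 0.01$ after combining with the training stage. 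The main obstacle I anticipate is making the MLE consistency argument fully rigorous while handling the autoregressive conditioning: the log-likelihood of $\mathcal{D}_i$ decomposes over steps, but the $t$-th term is an expectation over inputs $\bm x$ and prefixes $\bm y_{<t}$ drawn from the true process, so one must verify that the per-step KL gap is strictly positive \emph{in expectation over this induced distribution} — which is precisely where the $\tv>0$ clause of Task Identifiability is used, via the fact that $\tv(P_{\theta_{1:t-1},\theta_t},P_{\theta_{1:t-1},\zeta_t})>0$ forces a positive-probability disagreement set and hence strictly positive KL.
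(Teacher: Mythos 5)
Your proposal is correct and follows essentially the same two-stage strategy as the paper's proof: consistency of a per-task, per-timestep MLE in the training stage (with error vanishing as $n_x\to\infty$ by finiteness of the hypothesis class and the $\tv>0$ clause forcing strictly positive KL), followed by sequential application of the discrimination test of Lemma~\ref{lem:dist_discrim} at inference with $\delta=1/(100Tn_\theta)$ and a union bound, so the inference error is at most $Tn_\theta/(100Tn_\theta)=0.01$ and the training error contributes $0$ in the limit. The one cosmetic difference is that the paper first restricts the MLE to hypotheses whose induced support matches the empirically observed support (a clean device for avoiding infinite cross-entropy terms), whereas you appeal to a generic Wald-type consistency argument; and your initial $T\d^2/2$ pair count is indeed an overcount, but your subsequent sharpening to only comparisons touching the true subtask recovers the paper's $n_\theta\exp(-c^2\ell/2)$ per-step bound.
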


\begin{proof}\textbf{}

\textbf{Step 1: Training Stage.}

We construct a learning algorithm that recovers
\[
f_{\theta^i} \;=\; \bigl(P_{\theta_1^i},\dots,P_{\theta_T^i}\bigr)
\]
from 
\[
\mathcal{D}_i \;=\; \{(\bm x^{i,j},\, \bm y^{i,j})\}_{j=1}^{n_x}
\iid\;  P_{\theta^i}(\bm x,\bm y),
\]
which is part of the training set $\mathcal{D}_{\mathrm{train}}$. The procedure is shown below:

\begin{algorithm}[H]
\caption{Training Stage}
\begin{algorithmic}[1]
\REQUIRE {Training set $\mathcal{D}_{\mathrm{train}}=\{\mathcal{D}_i\}_{i=1}^{n_\theta}$}
\FOR{$i = 1$ to $n_\theta$}
  \FOR{$t = 1$ to $T$}
    \STATE $\mathrm{Observed\_Support}_t \;\gets\; \{(\bm x^{i,j},\,\bm y^{i,j}_{1:t})\}_{j=1}^{n_x}$
    \STATE $\mathcal{P}_{\Xi_t'} \;\gets\;
            \Bigl\{
              P_{\xi_t}\in \mathcal{P}_{\Xi_t}
              :\, \mathrm{supp}\bigl(P_{\theta^i_{1:t-1},\,\xi_t}\bigr)
              = \mathrm{Observed\_Support}_t
            \Bigr\}$
    \STATE $\hat{\theta}_t^i \;\gets\;
           \displaystyle \arg\max_{\xi_t \,\in\, \Xi_t'} 
           \sum_{j=1}^{n_x} \log 
             P_{\hat{\theta}_{1:t-1}^i,\,\xi_t}\!\bigl(\bm x^{i,j},\, \bm y^{i,j}_{1:t}\bigr)$
  \ENDFOR
\ENDFOR
\STATE \textbf{return} $\fP_{\hat\Theta_t}=\{{P}_{\hat{\theta}_t^i}\}_{i=1}^{n_\theta}$ for each $t\in[T]$.
\end{algorithmic}
\end{algorithm}

We now show that, as $n_x \to \infty$, 
\[
\Pr\bigl[
  \mathcal{P}_{\hat{\Theta}_t} \;=\; \mathcal{P}_{\Theta_t}
  \;\;\forall\,t \in [T]
\bigr] \;\rightarrow\;1.
\]
Assume $\hat{\theta}^i_{1:t-1} = \theta^i_{1:t-1}$. As $n_x\to \infty$,
\[
\Pr\bigl[
  \mathrm{Observed\_Support}_t \;\neq\; \mathrm{supp}\bigl(P_{\theta^i_{1:t}}\bigr)
\bigr]
\;\le\;
\sum_{(\bm x,\bm y_{1:t}) \,\in\, \mathrm{supp}\bigl(P_{\theta^i_{1:t}}\bigr)}
\Pr\bigl[
  (\bm x^{i,j},\bm y^{i,j}_{1:t}) \neq (\bm x,\bm y_{1:t}) \;\;\forall\,j \in [n_x]
\bigr]
\;\to\; 0.
\]

Conditioned on
$\mathrm{Observed\_Support}_t 
  = \mathrm{supp}\bigl(P_{\theta^i_{1:t}}\bigr)$,
any $P_{\xi_t}$ in $\mathcal{P}_{\Xi_t}'$ must share the same support:
\[
\mathrm{supp}\bigl(P_{\theta^i_{1:t-1},\,\xi_t}\bigr)
=
\mathrm{supp}\bigl(P_{\theta^i_{1:t-1},\,\theta_t^i}\bigr).
\]
Thus the cross-entropy is finite. By the law of large numbers and Pinsker’s inequality, for $P_{\xi_t} \neq P_{\theta_t^i}$ we have

\begin{align*}
&\lim _{n_x\to \infty }\left(\frac{1}{n_x} \sum_{j=1}^{n_x} \log P_{\theta_{1:t-1}^i,\theta^i_t}(\bm x^{i,j}, \bm y^{i,j}_{1:t})-\frac{1}{n_x} \sum_{j=1}^{n_x} \log P_{\theta_{1:t-1}^i,\xi_t}(\bm x^{i,j}, \bm y^{i,j}_{1:t})\right)\\
\to& KL(P_{\theta_{1:t-1}^i,\theta_t^i},P_{\theta_{1:t-1}^i,\xi_t})\\
\geq& 2\tv({P_{\theta_{1:t-1}^i,\theta_t^i},P_{\theta_{1:t-1}^i,\xi_t})}^2\\
>&0
\end{align*}
almost surely. Thus,
\[
\Pr\bigl[P_{\hat{\theta}_t^i} \neq P_{\theta_t^i}\bigr]
\;\le\;
\sum_{\xi_t \,\in\, \Xi_t',\, P_{\xi_t}\neq P_{\theta_t^i}}
  \Pr\Bigl[
    \tfrac1{n_x} \sum_{j=1}^{n_x} \log P_{\theta^i_{1:t-1},\,\xi_t}(\bm x^{i,j},\,\bm y^{i,j}_{1:t})
    \;\ge\;
    \tfrac1{n_x} \sum_{j=1}^{n_x} \log P_{\theta^i_{1:t-1},\,\theta_t^i}(\bm x^{i,j},\,\bm y^{i,j}_{1:t})
  \Bigr]
\;\to\; 0.
\]
Recursively applying this argument from $t=1$ to $T$ and taking a union bound over all $i\in[n_\theta]$ and $t\in[T]$,
\[
\lim_{n_x\to\infty} 
\Pr\bigl[~\exists\, (t,i) : P_{\hat{\theta}_t^i} \neq P_{\theta_t^i}\bigr]
\;=\;0.
\]
Finally, by the assumption that 
$\{P_{\theta^1_t}, \dots, P_{\theta^{n_\theta}_t}\} = \mathcal{P}_{\Theta_t}$ 
for each $t$, we conclude
\[
\Pr\Bigl[\mathcal{P}_{\hat{\Theta}_t} \;=\; \mathcal{P}_{\Theta_t} 
      \;\;\forall\,t \in[T]\Bigr]
\;\to\;1
\quad\text{as}\quad
n_x\to\infty.
\]

\subsection*{Step 2: Inference Stage}

Assume that, in the training phase, the learner identifies the true distribution sets 
\[
\fP_{\hat{\Theta}_t} \;=\; \bigl\{P_{\hat{\theta}_t^1}, \dots, P_{\hat{\theta}_t^{n_\theta}}\bigr\}
\;=\; 
\fP_{\Theta_t}
\quad
\text{for all }t=1,\dots,T.
\]
We now construct an algorithm that, given 
\(\ell \,\ge\, \tfrac{2\,\ln\bigl(100\,T\,n_{\theta}\bigr)}{c^2}\) 
independent samples from the unseen composite task 
\[
f_{\tilde{\theta}} 
\;=\;
\bigl(P_{\tilde{\theta}_1},\,\dots,\,P_{\tilde{\theta}_T}\bigr),
\]
outputs the same distribution tuple 
\(\bigl(P_{\tilde{\theta}_1},\,\dots,\,P_{\tilde{\theta}_T}\bigr)\) 
with probability at least \(0.99\). 

To achieve this, we leverage the distribution discrimination technique of Lemma~\ref{lem:dist_discrim} to distinguish between candidates in $\fP_{\hat{\Theta}_t}$, given sufficiently many demonstration samples.

\begin{algorithm}[H]
\caption{Inference Stage}
\label{alg:inference}
\begin{algorithmic}[1]
\REQUIRE Inference-time demonstration samples 
\(\fD_{\demo} \;=\; \{(\tilde{\bm x}^j,\,\tilde{\bm y}^j)\}_{j=1}^{\ell}\) 
i.i.d.\ from \(P_{\tilde{\theta}}\), 
and the identified sets
\(\fP_{\hat{\Theta}_t} = \{P_{\hat{\theta}_t^1}, \dots, P_{\hat{\theta}_t^{n_\theta}}\}\) 
for each \(t \in [T]\).

\FOR{$t = 1$ to $T$}
  \STATE Initialize \(P_{\zeta_t} \gets P_{\hat{\theta}_t^1}\). 
  \FOR{$i = 2$ to $n_\theta$}
    \STATE Compute
      \[
      \phi 
      \;\leftarrow\; 
      \frac{1}{\ell}\,\sum_{j=1}^{\ell} 
      \;(-1)^{\,\mathbf{1}\bigl[
        P_{\zeta_{1:t-1},\,\zeta_t}(\tilde{\bm x}^j,\tilde{\bm y}^j_{1:t})
        \;<\;
        P_{\zeta_{1:t-1},\,\hat{\theta}_t^i}(\tilde{\bm x}^j,\tilde{\bm y}^j_{1:t})
      \bigr]}\,.
      \]
    \IF{\begin{align*}
      &\left|\,
        \phi 
        \;-\;
        \sum_{(\bm x,\bm y_{1:t})\in\fX\times \fY^t}
        P_{\zeta_{1:t-1},\,\hat{\theta}_t^i}(\bm x,\bm y_{1:t})\;
        (-1)^{\,\mathbf{1}\bigl[
          P_{\zeta_{1:t-1},\,\zeta_t}(\bm x,\bm y_{1:t})
          < 
          P_{\zeta_{1:t-1},\,\hat{\theta}_t^i}(\bm x,\bm y_{1:t})
        \bigr]}
      \right|\\<&
      \left|\,
        \phi 
        \;-\;
        \sum_{(\bm x,\bm y_{1:t})\in\fX\times \fY^t} 
        P_{\zeta_{1:t-1},\,\zeta_t}(\bm x,\bm y_{1:t})\; 
        (-1)^{\,\mathbf{1}\bigl[
          P_{\zeta_{1:t-1},\,\zeta_t}(\bm x,\bm y_{1:t})
          < 
          P_{\zeta_{1:t-1},\,\hat{\theta}_t^i}(\bm x,\bm y_{1:t})
        \bigr]}
      \right|.
      \end{align*}}
      \STATE Update \(P_{\zeta_t} \gets P_{\hat{\theta}_t^i}\).
    \ENDIF
  \ENDFOR
\ENDFOR
\STATE \textbf{return} \(\bigl(P_{\zeta_1},\,\dots,P_{\zeta_T}\bigr)\).
\end{algorithmic}
\end{algorithm}

\paragraph{Error Analysis.}
Let $P_{\zeta_t}$ be the chosen distribution at step $t$. 
Given $P_{\zeta_{1:t-1}} = P_{\tilde{\theta}_{1:t-1}}$, 
Lemma~\ref{lem:dist_discrim} ensures that any incorrect candidate can be detected with high probability, as long as $P_{\tilde{\theta}_t} \in \fP_{\hat{\Theta}_t}$. Specifically,
\begin{align*}
&\Pr\Bigl[P_{\zeta_t} \,\neq\, P_{\tilde{\theta}_t}
  \;\bigm|\; 
  P_{\zeta_{1:t-1}} = P_{\tilde{\theta}_{1:t-1}}
\Bigr]\\
\;\le\;&
\sum_{i=2}^{n_\theta}
\Pr\Bigl[
  \text{distribution testing fails at step $i$}
  \;\bigm|\;
  \zeta_t = \tilde{\theta}_t 
  \;\text{or}\;
  \hat{\theta}_t^i = \tilde{\theta}_t
\Bigr]
\;\le\;
n_\theta\,\exp\!\bigl(-\tfrac{c^2\,\ell}{2}\bigr).
\end{align*}
A union bound over $t=1,\dots,T$ then implies
\[
\Pr\bigl[
  (P_{\zeta_1},\dots,P_{\zeta_T})
  \,\neq\,
  (P_{\tilde{\theta}_1},\dots,P_{\tilde{\theta}_T})
\bigr]
\;\le\;
T\,n_\theta\,\exp\!\bigl(-\tfrac{c^2\,\ell}{2}\bigr).
\]
Since
\(\ell \,\ge\, \tfrac{2\,\ln\bigl(100\,T\,n_\theta\bigr)}{c^2}\),
we get 
\[
T\,n_\theta\,\exp\;\bigl(-\tfrac{c^2\,\ell}{2}\bigr) 
\;\leq \;
0.01.
\]

\subsection*{Step 3: Combining Training and Inference}

From the results of Step 1 and Step 2, we have
\begin{align*}
&\lim_{n_x \to \infty} 
  \Pr\bigl[
    \mathcal{A}(\mathcal{D}_{\demo};\,\mathcal{D}_{\mathrm{train}}) 
    \;\neq\; 
    (P_{\tilde{\theta}_1}, \dots, P_{\tilde{\theta}_T})
  \bigr]
\\
\quad \le &
\lim_{n_x\to\infty} \Bigl(
  \Pr\bigl[
    (P_{\zeta_1},\dots,P_{\zeta_T}) 
    \neq 
    (P_{\tilde{\theta}_1}, \dots, P_{\tilde{\theta}_T})
    \;\bigm|\;
    \fP_{\hat{\Theta}_t} = \fP_{\Theta_t}
    \;\forall t
  \bigr]
  \;+\;
  \Pr\bigl[
    \exists\,t \in [T]:\, \fP_{\hat{\Theta}_t} \neq \fP_{\Theta_t}
  \bigr]
\Bigr)
\\
\quad = &\; 0.01 + 0 \;=\; 0.01.
\end{align*}
\end{proof}

\begin{proof}[Proof of Theorem~\ref{thm:exponential task generalization}]
By Lemma~\ref{lem:exponential task generalization}, it suffices to verify that per-component coverage condition holds for each timestep with high probability when $n_\theta = \d \ln\bigl(100\,\d\,T\bigr)$.

For each timestep $t\in[T]$, observe
\begin{align*}
\Pr\bigl[
  \{P_{\theta_t^i}\}_{i=1}^{n_\theta} \,\neq\, \fP_{\Theta_t}
\bigr]
&\;\le\;
\sum_{P_{\theta_t}\in \fP_{\Theta_t}}
\Pr\bigl[
  P_{\theta_t^i} \neq P_{\theta_t} \;\;\forall\,i \in [n_\theta]
\bigr]
\\
&=\;
\d \,\Bigl(1 - \frac{1}{\d}\Bigr)^{n_\theta}.
\end{align*}
Hence,
\begin{align*}
\Pr\bigl[
  \exists\,t \in [T]:
  \{P_{\theta_t^i}\}_{i=1}^{n_\theta} \neq \fP_{\Theta_t}
\bigr]
&\;\le\;
\sum_{t=1}^{T}
\Pr\bigl[
  \{P_{\theta_t^i}\}_{i=1}^{n_\theta} \neq \fP_{\Theta_t}
\bigr]
\\
&\leq\;
\d\,T
\,\Bigl(1 - \frac{1}{\d}\Bigr)^{n_\theta}
\;<\;
\d\,T \,e^{-n_\theta/\d}
\;=\;
0.01.
\end{align*}
This completes the proof.
\end{proof}

\subsection{Proof of \cref{corollary:parity}}

\begin{proof}[Proof of \cref{corollary:parity}]
It suffices to verify that for the sparse parity problem, the constant $c$ in ~\cref{assm: compositional structure} is $\frac 12$. 

We denote $P_{ i_1,\cdots,i_t}(\bm x,\bm y_{<t}) = \frac 1 {2^d} \times \mathbf 1  [y_1 = x_{i_1}]\prod_{s=2}^t \mathbf 1[y_s = y_{s-1}\oplus x_{i_s}]$.
For any $i_1,\cdots,i_{t-1}\in [d]$, and $i_{t}\neq i_{t}'\in [d]$ and  it holds that 
\begin{align*}
\tv(P_{i_1,\cdots,i_{t-1}, i_t},P_{i_1,\cdots,i_{t-1},i_t'})&=\frac 1 {2^n } \sum_{x\in \{0,1\}^d}\mathbf 1 [x_{i_1}\oplus\cdots\oplus x_{i_{t-1}}\oplus x_{i_t}\neq x_{i_1}\oplus\cdots\oplus x_{i_{t-1}}\oplus x_{i_t'}]\\
&=\frac 1 {2^n }\sum_{x\in \{0,1\}^d} \mathbf 1[x_{i_{t}}\neq  x_{i_{t}'}]\\
&=\frac 12 .
\end{align*}
\end{proof}
This completes the proof.
\section{Non-Asymptotic Analysis of Training-Time Demonstration Sample Complexity}
\label{appendix:nonasymp-result}

In Theorem~\ref{thm:exponential task generalization}, we established only an \emph{asymptotic} result, showing that as {the number of demonstration samples per task at training time }$n_x \to \infty$, the probability of correctly identifying subtask families $\fP_{\Theta_t}$  tends to one. However, by imposing an additional assumption on the total variation gap between the true distributions and any other hypotheses, it is possible to derive a \emph{non-asymptotic} guarantee on how large $n_x$ must be for accurate subtask identification.

Although maximum-likelihood estimation (MLE) does not directly yield such a non-asymptotic bound in this setting, we can use the same distribution discrimination approach introduced in the inference stage (Lemma~\ref{lem:dist_discrim}).

\begin{assumption}[Compositional Identifiability with fixed tv marigin]\label{assm: compositional structure with gap}
The autoregressive task class $\mathcal{F}$ satisfies:
\begin{enumerate}
    \item \textbf{Finite Subtask Families}: For each $t \in [T]$, the hypothesis class $\mathcal{P}_{\Xi_t}$ is of size at most $H$ and the subtask conditional distribution family $\mathcal{P}_{\Theta_t} \subseteq \mathcal{P}_{\Xi_t}$ has size $|\mathcal{P}_{\Theta_t}| = \d$.

    \item \textbf{Task Identifiability}: For any $t \in [T]$,  $\theta_{1:t-1} \in \bigtimes_{s=1}^{t-1} \Theta_s$, and  $\theta_t \in \Theta_t$, $\zeta_t \in \Xi_t $, $P_{\zeta_t}\neq P_{\theta_t}$, the induced distributions stasify:
    \[
    \tv\left(P_{\theta_{1:t-1}, \theta_t}, P_{\theta_{1:t-1}, \zeta_t}\right) \geq r > 0.
    \]
    
    Furthermore, for any timestep $t \in [T]$,  $\theta_{1:t-1} \in \bigtimes_{s=1}^{t-1} \Theta_s$, and $\theta_t \neq \theta_t' \in \Theta_t$, the induced distributions satisfy:
    \[
    \tv\left(P_{\theta_{1:t-1},\theta_t}, P_{\theta_{1:t-1},\theta_t'}\right) \geq c > 0.
    \]

\end{enumerate}
\end{assumption}

\begin{theorem}[Exponential Task Generalization]\label{appendix thm:exponential task generalization}
Let $\mathcal{F}$ be an autoregressive compositional task class satisfying Assumption~\ref{assm: compositional structure}. Then there exists a learner $\mathcal{A}$ with the following property: if during training, one samples $n_{\theta}$ tasks uniformly and independently from $\mathcal{F}$, each provided with $n_x$ i.i.d.\ demonstration samples as the training dataset, and if at inference one observes $\ell$
i.i.d.\ demonstration samples from a previously unseen task $P_{\tilde{\theta}}\in\mathcal{F}$, then
\[
\Pr\Bigl[
  \mathcal{A}\bigl(\mathcal{D}_{\demo};\,\mathcal{D}_{\mathrm{train}}\bigr)
  \;\neq\;
  \bigl(P_{\tilde{\theta}_1}, \dots, P_{\tilde{\theta}_T}\bigr)
\Bigr]
\;\le\; \d Te^{-n_\theta/\d} + n_\theta T e^{-c^2\ell/2} + n_\theta THe^{-r^2 n_x/2}.
\]
where $\fD_\train$ and $\fD_\demo$ denote the training dataset and inference-time demonstration samples respectively, and the probability is taken over the random selection of training tasks 
$\mathcal{F}_{\mathrm{train}} \subseteq \mathcal{F}$, 
the training data $\mathcal{D}_{\mathrm{train}}$, 
and the inference time demonstration samples $\mathcal{D}_{\demo}$. 
\end{theorem}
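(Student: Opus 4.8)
The plan is to mirror the three-step structure of the proof of Theorem~\ref{thm:exponential task generalization} (per-component coverage $\to$ training-stage identification $\to$ inference-stage discrimination), but replace the asymptotic MLE analysis in the training stage with a finite-sample application of the distribution-discrimination test from Lemma~\ref{lem:dist_discrim}, which is exactly what Assumption~\ref{assm: compositional structure with gap} is designed to enable. The final bound has three additive error terms, and I would obtain each one from a separate step and then combine by a union bound.

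First I would handle the per-component coverage event, exactly as in the proof of Theorem~\ref{thm:exponential task generalization}: since the $n_\theta$ training tasks are sampled uniformly and independently from $\mathcal{F} = ARC(\d,T)$, for each fixed $t$ the marginal over $\theta_t$ is uniform on the $\d$ subtasks, so the probability that some subtask in $\mathcal{P}_{\Theta_t}$ is never realized is at most $\d(1-1/\d)^{n_\theta} \le \d e^{-n_\theta/\d}$; a union bound over $t \in [T]$ gives the first term $\d T e^{-n_\theta/\d}$. Conditioned on coverage holding, the training-stage learner must, for each $i$ and each $t$, correctly identify $P_{\theta_t^i}$ within $\mathcal{P}_{\Xi_t}$. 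Here I would run the test of Lemma~\ref{lem:dist_discrim} in a nested fashion (as in Algorithm~\ref{alg:inference}): assuming $P_{\hat\theta_{1:t-1}^i} = P_{\theta_{1:t-1}^i}$ has already been established, the true conditional $P_{\theta_{1:t-1}^i,\theta_t^i}$ is separated from every alternative $P_{\theta_{1:t-1}^i,\zeta_t}$ (with $P_{\zeta_t} \ne P_{\theta_t^i}$, $\zeta_t \in \Xi_t$) by total variation at least $r$, by the first clause of Assumption~\ref{assm: compositional structure with gap}. Lemma~\ref{lem:dist_discrim} then says each pairwise test fails with probability at most $e^{-r^2 n_x/2}$; union-bounding over the at most $H$ candidates in $\mathcal{P}_{\Xi_t}$, over $t \in [T]$, and over $i \in [n_\theta]$ yields the third term $n_\theta T H e^{-r^2 n_x/2}$.

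Next I would reuse the inference-stage analysis verbatim from the proof of Lemma~\ref{lem:exponential task generalization}: conditioned on $\mathcal{P}_{\hat\Theta_t} = \mathcal{P}_{\Theta_t}$ for all $t$, Algorithm~\ref{alg:inference} runs the Lemma~\ref{lem:dist_discrim} test among the $\d$ identified candidates at each step, with total variation margin $c$ coming from the second clause of Assumption~\ref{assm: compositional structure with gap}; each sweep fails with probability at most $n_\theta e^{-c^2\ell/2}$ (the $n_\theta$ factor counting the candidates scanned), and a union bound over $t \in [T]$ gives the second term $n_\theta T e^{-c^2\ell/2}$. Finally I would assemble the full failure event as the union of (i) coverage failure, (ii) training-stage identification failure given coverage, and (iii) inference-stage failure given correct identification, and add the three bounds. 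Two points deserve care. The nesting in the training stage means the TV-margin hypothesis must apply at the \emph{correct} prefix $\theta_{1:t-1}^i \in \bigtimes_{s<t}\Theta_s$ — which is exactly the form stated in Assumption~\ref{assm: compositional structure with gap}, so the recursion goes through cleanly once one checks that conditioning on the high-probability event at steps $1,\dots,t-1$ does not disturb the marginal law of the step-$t$ samples. The second, and I expect the only genuinely fiddly, point is bookkeeping the union-bound factors so that the candidate-count $H$ attaches only to the training term (hypothesis class $\mathcal{P}_{\Xi_t}$) while $\d$ attaches to the inference term (already-identified family $\mathcal{P}_{\hat\Theta_t}$); getting these constants to match the stated bound is routine but must be done carefully. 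There is no deep obstacle — the theorem is essentially a non-asymptotic recombination of lemmas already in hand.
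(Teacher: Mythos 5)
Your proposal matches the paper's proof: the paper likewise presents a training-stage algorithm (Algorithm~3) that replaces the asymptotic MLE argument of Theorem~\ref{thm:exponential task generalization} with the finite-sample discrimination test of Lemma~\ref{lem:dist_discrim} under the $r$-margin of Assumption~\ref{assm: compositional structure with gap}, obtaining $n_\theta T H e^{-r^2 n_x/2}$ for training-stage identification, and then reuses the coverage and inference analyses of Theorem~\ref{thm:exponential task generalization} verbatim to supply the remaining two terms. Your bookkeeping of which margin ($r$ vs.\ $c$) and which cardinality ($H$ vs.\ $n_\theta$) attaches to each term is exactly the paper's.
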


\begin{proof}

Denote the hypothesis class $\fP_{\Xi_t} = \{P_{\xi_{t,1}},\cdots, P_{\xi_{t,|\Xi_t|}}\}$, we present the training stage of the learner.

\begin{algorithm}[H]
\caption{Training Stage with Distribution Dislimination}
\begin{algorithmic}[1]
\REQUIRE Training set $\mathcal{D}_{\mathrm{train}}=\{\mathcal{D}_i\}_{i=1}^{n_\theta}$
\FOR{$i = 1$ to $n_\theta$}
    \FOR{$t = 1$ to $T$}
      \STATE Initialize \(P_{\hat\theta_t^i} \gets P_{\xi_{t,1}}\). 
      \FOR{$k = 2$ to $|\Xi_t|$}
        \STATE Compute
          \[
          \phi 
          \;\leftarrow\; 
          \frac{1}{n_x}\,\sum_{(\bm x^{i,j},\bm y^{i,j})\in \fD_i}
          \;(-1)^{\,\mathbf{1}\bigl[
            P_{\hat\theta_{1:t-1},\,\hat{\theta}_t^i}({\bm x}^{i,j},{\bm y}^{i,j}_{1:t})
            \;<\;
            P_{\hat\theta_{1:t-1},\,\xi_{t,k}}({\bm x}^{i,j},{\bm y}^{i,j}_{1:t})
          \bigr]}\,.
          \]
        \IF{\begin{align*}
          &\left|\,
            \phi 
            \;-\;
            \sum_{(\bm x,\bm y_{1:t})\in \fX\times \fY^t}
            P_{\hat\theta_{1:t-1},\,\xi_{t,k}}(\bm x,\bm y_{1:t})\;
            (-1)^{\,\mathbf{1}\bigl[
            P_{\hat\theta_{1:t-1},\,\hat{\theta}_t^i}({\bm x},{\bm y}_{1:t})
            \;<\;
            P_{\hat\theta_{1:t-1},\,\xi_{t,k}}({\bm x},{\bm y}_{1:t})
          \bigr]}
          \right|\\<&
          \left|\,
            \phi 
            \;-\;
            \sum_{(\bm x,\bm y_{1:t})\in \fX\times \fY^t} 
            P_{\hat\theta_{1:t-1},\,\hat{\theta}_t^i}(\bm x,\bm y_{1:t})\; 
            (-1)^{\,\mathbf{1}\bigl[
            P_{\hat\theta_{1:t-1},\,\hat{\theta}_t^i}({\bm x},{\bm y}_{1:t})
            \;<\;
            P_{\hat\theta_{1:t-1},\,\xi_{t,k}}({\bm x},{\bm y}_{1:t})
          \bigr]}
          \right|.
          \end{align*}}
          \STATE Update \( P_{\hat{\theta}_t^i} \gets P_{\xi_{t,k}}\).
        \ENDIF
      \ENDFOR
    \ENDFOR
\ENDFOR
\STATE \textbf{return} $\fP_{\hat\Theta_t}=\{{P}_{\hat{\theta}_t^i}\}_{i=1}^{n_\theta}$ for each $t\in[T]$.
\end{algorithmic}
\end{algorithm}

Using the same approach as in Step 2 of the proof of ~\cref{thm:exponential task generalization}, 
\[
\Pr[(P_{\hat \theta^i_1},\cdots,P_{\hat \theta_T^i})\neq (P_{\theta^i_1},\cdots,P_{\theta_T^i})]\leq THe^{-{r^2 n_x}/2}.
\]
By union bound, 
\begin{align*}
\Pr[\exists ~t \in [T]:~ \fP_{\hat \Theta_t} \neq \fP_{\Theta_t }]\leq \Pr[~\exists (t,i):~ ~P_{\hat \theta^i_t}\neq P_{\theta^i_t}]\leq n_\theta T He^{-r^2 n_x/2}.
\end{align*}
The remainder of the proof then proceeds exactly as in ~\cref{thm:exponential task generalization}.

\end{proof}

\section{Extra experiments}\label{appendix:extra_exps}

\paragraph{Effect of Context Length.} The theory assumes access to an infinite number of examples for each training task but does not require infinite demonstrations during inference. However, in practice, we cannot train on an infinite number of examples. Figure \ref{fig:cl_effect} shows that providing sufficient context length during both training and inference is crucial for strong performance. Empirically, we observed that a context length of 40 works reasonably well across all experiments with dimensions up to $d = 20$.

\begin{figure}[h!] 
    \centering
    \includegraphics[width=0.3\textwidth]{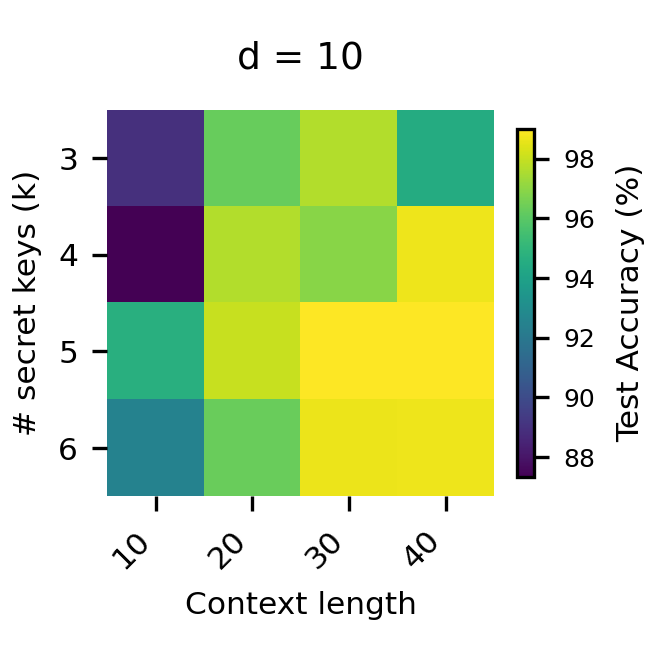}
    \caption{The effect of context length on performance.}
    \label{fig:cl_effect}
\end{figure}

\paragraph{ ICL with no CoT fails in even in-distribution generalziation.} We observe in Figure \ref{fig:in_distribution_effect} that transformers with ICL and no CoT struggle to generalize even in simpler in-distribution settings as the number of tasks increases. In the parity task, we refer to in-distribution generalization as a setting where the model is trained on $\mathcal{F}_{train}$ tasks and $\mathcal{S}_{train}$ sequences, and then evaluated on the same set of tasks $\mathcal{F}_{train}$ but with entirely new sequences $\mathcal{S}_{test}$ that were not seen during training.

Here, the setting is the same as in \cite{bhattamishra2024understanding} for \( \text{Parity}(10,2) \), but we used the same tasks during both training and testing. We trained on half of the total sequences, $2^9$ and tested on unseen sequences while keeping the tasks unchanged.

\begin{figure}[h!] 
    \centering
    \includegraphics[width=0.5\textwidth]{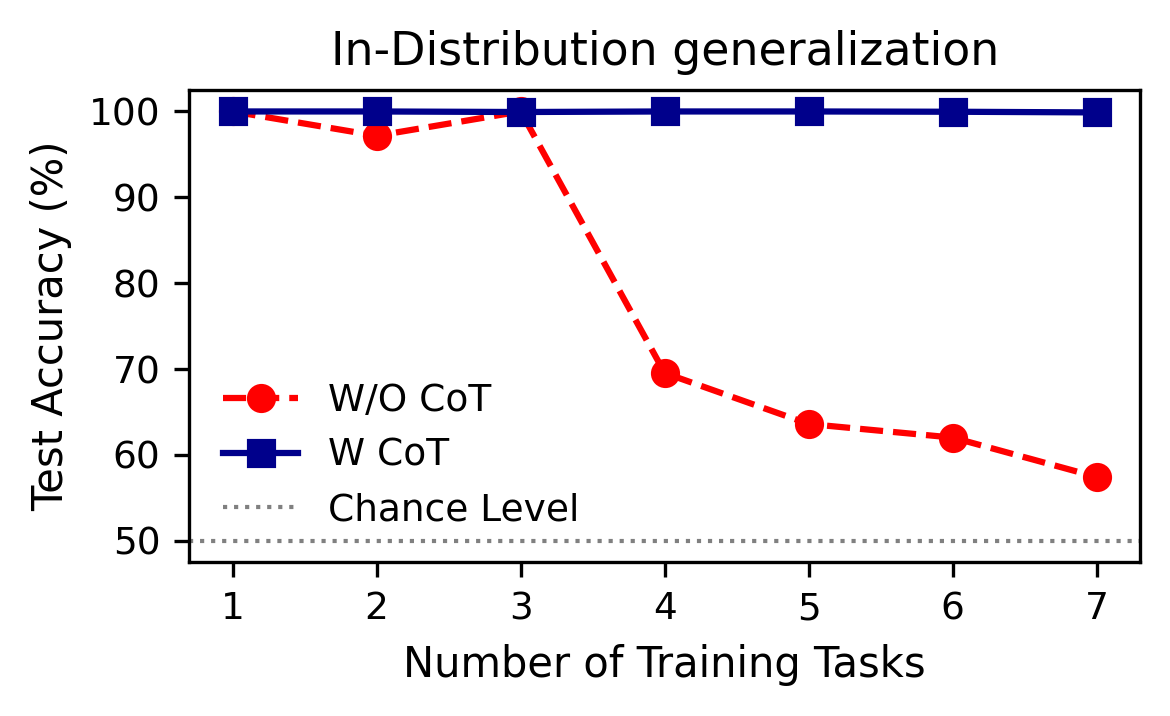}
    \caption{ICL without CoT even fails to generalize in distribution.}
    \label{fig:in_distribution_effect}
\end{figure}

\section{Experiment Details}\label{appendix: experiment details}

\paragraph{Model and optimization.} We used the transformers library from Hugging Face \cite{wolf2020transformers} to instantiate and train our GPT-2 model from scratch. In all experiments, we used a 3-layer, 1-head configuration. We used the Wadam optimizer \cite{kingma2014adam} with a learning rate of $8 \times 10^{-5}$ and a batch size of 64.

\paragraph{Parity and arithmetic.} In all experiments shown in Figures \ref{fig:ood_generalization} and \ref{fig:arithmetic} for both parity and arithmetic tasks, we used a context length of 40. 

For the arithmetic problem, across all dimensions, we used a total of 25,000 training examples, equally distributed across the training tasks. 

For the parity problem, we used 20,000 training samples, equally distributed across the training tasks for dimensions up to 15. For dimension 20, we increased the total number of training samples to 50,000.

At testing time, we always randomly select the minimum between 200 subsets and all remaining tasks, each containing 500 different sequences with the same context length of 40.

\paragraph{Language experiments.} For the translation experiments, we train a 2-layer Transformer with 3 heads and embedding dimension 768. We use an Adam optimizer with betas being $0.9, 0.95$ and learning rate 3e-4. We will keep the number of total training samples to be $1e5$ and train for 1 pass for 6250 steps.  We choose the languages randomly from the following set $\{ English, French, Spanish, Chinese,
        German, Italian, Japanese, Russian,\\
        Portuguese, Arabic \}$ and meanings (in English) from $\{ cat, dog, house, apple, sky, car ,road\\, tree, bed, water, sun, moon\}$.  We use a GPT-2 tokenizer and in our demonstrations, we will prepend the language of the corresponding word before each word in the following format like ``English: cat''.

\paragraph{Linear Probing}

We append a linear classifier to the checkpoints of models of ``Increasing $D$ for a fixed $T$'' tasks, trained on the hidden states of the final attention layer when generating the $i$-th token in the Chain-of-Thought, with the goal of predicting the $i$-th "secret index." The models are trained on a total number of of $20,000$, $20,000$, and $50,000$ training samples for $d = 10$, $15$, and $20$, respectively. The tasks used for training and validation are disjoint. Only the linear classifier is trained, while the parameters of the transformer are frozen. We use the Adam optimizer with a learning rate of $4 \times 10^{-5}$, and the batch size is set to be $32$.

\end{document}